\documentclass[11pt, a4paper, twocolumn, copyright, goog]{google}

\usepackage[authoryear, sort&compress, round]{natbib}
\uselogo{go} 

\title{On Surprising Effectiveness of Masking Updates in Adaptive Optimizers}

\correspondingauthor{taejong.joo@northwestern.edu, cheolmin@google.com}
\reportnumber{} % Leave blank if n/a

\renewcommand{\today}{2026-02-16}

\usepackage{color, colortbl}
\definecolor{Gray}{gray}{0.9}

\usepackage{amsmath}
\usepackage{amssymb}
\usepackage{mathtools}
\usepackage{amsthm}

\usepackage{enumitem}

\usepackage{amsmath,amsfonts,bm}

\def\eqref#1{equation~\ref{#1}}
\def\1{\bm{1}}

\def\vg{{\bm{g}}}

\def\vs{{\bm{s}}}

\def\vu{{\bm{u}}}
\def\vv{{\bm{v}}}
\def\vw{{\bm{w}}}
\def\vx{{\bm{x}}}

\def\mD{{\bm{D}}}

\def\mH{{\bm{H}}}
\def\mI{{\bm{I}}}

\def\mS{{\bm{S}}}

\def\mU{{\bm{U}}}

\DeclareMathAlphabet{\mathsfit}{\encodingdefault}{\sfdefault}{m}{sl}
\SetMathAlphabet{\mathsfit}{bold}{\encodingdefault}{\sfdefault}{bx}{n}

\def\gF{{\mathcal{F}}}

\def\gM{{\mathcal{M}}}

\def\gO{{\mathcal{O}}}

\def\gR{{\mathcal{R}}}

\newcommand{\E}{\mathbb{E}}

\newcommand{\R}{\mathbb{R}}

\newcommand{\sigmoid}{\operatorname{sigmoid}}

\usepackage{xcolor} % http://www.ctan.org/tex-archive/macros/latex/contrib/xcolor

\usepackage[capitalize,noabbrev]{cleveref}

\theoremstyle{plain}
\newtheorem{theorem}{Theorem}
\newtheorem{proposition}[theorem]{Proposition}
\newtheorem{lemma}[theorem]{Lemma}

\theoremstyle{definition}

\newtheorem{assumption}[theorem]{Assumption}
\theoremstyle{remark}

\usepackage{xcolor}
\usepackage{graphicx}

\author[1,*]{Taejong Joo}
\author[2]{Wenhan Xia}
\author[2]{Cheolmin Kim}
\author[2]{Ming Zhang}
\author[2]{Eugene Ie}

\affil[1]{Northwestern University}
\affil[2]{Google}
\affil[*]{Work done while the author was a Student Researcher at Google.}

\begin{abstract}
Training large language models (LLMs) relies almost exclusively on dense adaptive optimizers with increasingly sophisticated preconditioners.
We challenge this by showing that randomly masking parameter updates can be highly effective, with a masked variant of RMSProp consistently outperforming recent state-of-the-art optimizers.
Our analysis reveals that the random masking induces a curvature-dependent geometric regularization that smooths the optimization trajectory. 
Motivated by this finding, we introduce \textbf{\underline{M}omentum-\underline{a}ligned \underline{g}radient \underline{ma}sking (Magma)}, which modulates the masked updates using momentum-gradient alignment.
Extensive LLM pre-training experiments show that Magma is a simple drop-in replacement for adaptive optimizers with consistent gains and negligible computational overhead.
Notably, for the 1B model size, Magma reduces perplexity by over 19\% and 9\% compared to Adam and Muon, respectively.
\end{abstract}

\begin{document}

\maketitle

\section{Introduction}
The availability of dense gradients from a single backward pass in backpropagation \citep{rumelhart1986learning} enables efficient simultaneous parameter updates. 
This efficiency has made dense adaptive optimizers like Adam \citep{kingma2014adam} the de facto standard for large-scale LLM training. 
In contrast, this reliance on dense gradients creates a structural mismatch with sparse update strategies, such as coordinate descent \citep{nesterov2012efficiency,wright2015coordinate,nutini2017let}. 
Consequently, despite their strong performance on the highly nonsmooth optimization problems common in LLM training, sparse methods are rarely used in this setting.

{\begin{algorithm}[tb]
\caption{\colorbox{blue!30}{\pmb{SkipUpdate}} and \colorbox{red!30}{\pmb{Magma}}.}
\label{alg:magma_pseudocode}
\footnotesize
  \begin{algorithmic}
    \STATE {\bfseries Input:} Parameters $\{ \theta_t^{(b)}\}_{b=1}^{B}$, Stochastic gradients $\{ \vg_t^{(b)}\}_{b=1}^{B}$, Updates from a base optimizer $\{ \Delta_t^{(b)}\}_{b=1}^{B}$, First-moment estimates $\{\mu_{t}^{(b)}\}_{b=1}^{B}$
    \FOR{Each block $b \in [B]$}
        \STATE \colorbox{blue!30}{$s_t^{(b)} = 2$ \quad \pmb{(SkipUpdate)}}
        \STATE \colorbox{red!30}{%
        \parbox{0.65\linewidth}{%
        $\tilde{s}_t^{(b)} =
        \sigmoid(\mathrm{cossim}(\mu_t^{(b)},\vg_t^{(b)})/\tau)$\\
        $s_t^{(b)} =
        0.9 s_{t-1}^{(b)} + 0.1 \tilde{s}_t^{(b)}$
        \quad \pmb{(Magma)}%
        }}
        \STATE $m_{t}^{(b)} \sim \mathrm{Bernoulli}(0.5)$
        \STATE $\theta_{t+1}^{(b)} = \theta_t^{(b)} -  s_t^{(b)} m_{t}^{(b)} \Delta_t^{(b)} $
    \ENDFOR
  \end{algorithmic}
\end{algorithm}
}

\begin{figure}
\vskip -0.1in
\begin{center}
\includegraphics[width=0.85\columnwidth]{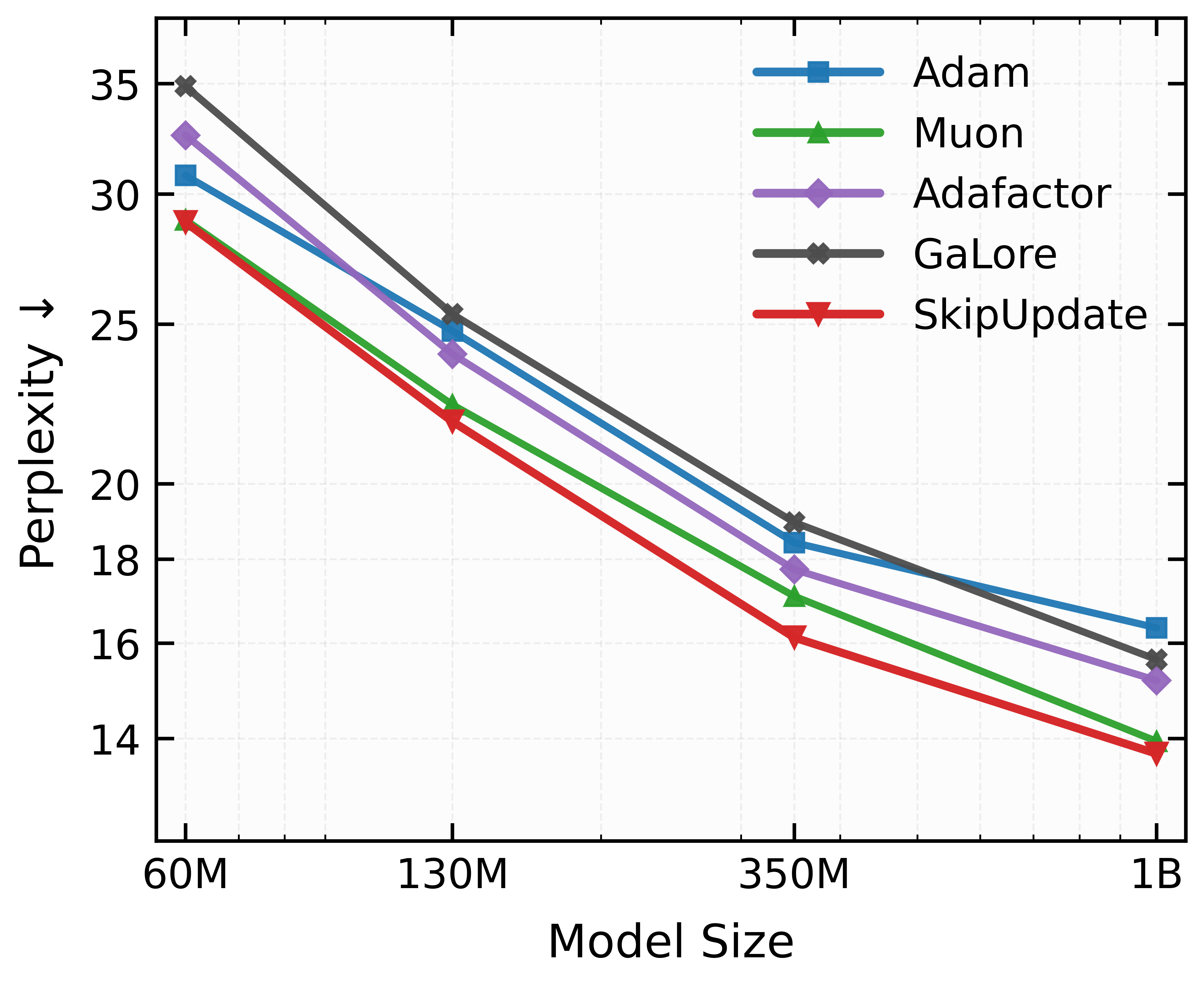}
\end{center}
\vskip -0.15in
\caption{
\textit{
Pre-training performance on C4 across model scales.}
Despite discarding half of updates, SkipUpdate yields substantial improvements over state-of-the-art dense optimizers.
}
\label{fig1:observation_mask}
\vskip -0.2in
\end{figure}

In this work, we challenge this prevailing paradigm with a counter-intuitive empirical finding: we observe that \emph{optimization performance can be substantially improved by randomly masking gradient updates}.
Specifically, we study a variant of RMSProp \citep{hinton2012rmsprop} in which entire parameter blocks are randomly masked at each iteration following a Bernoulli distribution. When a block is masked, its parameter update is skipped for that step; however, moment estimates are still updated densely, and surviving updates are appropriately rescaled to preserve unbiasedness (cf. SkipUpdate in Algorithm \ref{alg:magma_pseudocode}).
From a classical convergence analysis perspective, such random masking would yield a worse worst-case convergence guarantee due to increased stochastic noise in the updates (cf. \S \ref{sec:convergence_analysis}). Further, each parameter effectively receives fewer updates per iteration while the computational cost of gradient computation via backpropagation remains unchanged.
However, despite discarding half of the updates, SkipUpdate consistently outperforms dense optimizers, including the recent state-of-the-art optimizer Muon \citep{jordan2024muon}, which incorporates sophisticated curvature information, across model scales (Figure~\ref{fig1:observation_mask}).

To explain the effectiveness of SkipUpdate, we analyze its geometric regularization effect.
Specifically, we show that block-wise masking induces a curvature-dependent geometric regularization, penalizing updates that align with sharp directions of the loss within each parameter block.
This effect smooths the optimization trajectory and biases the algorithm toward flatter regions of the loss landscape, which are empirically associated with improved generalization in deep networks \citep{hochreiter1997flat,keskar2016large,jastrzkebski2017three}.
Importantly, this regularization emerges implicitly from stochastic noise in the update rule, rather than from explicit curvature computation.

We further investigate whether stochastic masking can be made more effective by exploiting information already present in adaptive optimizers.
We find that modulating the masked updates based on the cosine similarity between the stochastic gradient and the first moment estimate leads to substantial gains.
This mechanism prioritizes momentum-consistent updates by suppressing updates not consistent with the accumulated direction of the gradient.
This yields \textbf{\underline{M}omentum-\underline{a}ligned \underline{g}radient \underline{ma}sking (Magma)} that consistently outperforms both adaptive optimizers and SkipUpdate.
Notably, Magma’s effectiveness increases with model size, consistent with larger models exhibiting more challenging optimization landscapes that require stronger geometric regularization.

Our contributions are:
\underline{\textbf{\textit{(i) Methodological:}}} Magma is a simple optimizer wrapper that improves training stability with improved generalization under the peculiar loss landscape of transformers at no additional computational cost;
\underline{\textit{\textbf{(ii) Theoretical:}}} we show that random gradient masking induces geometric regularization toward flatter trajectories, reducing curvature sharpness and gradient noise;
\underline{\textbf{\textit{(iii) Empirical:}}} we demonstrate consistent gains over state-of-the-art optimizers across diverse pre-training scenarios.

% ========================================================================
% ========================================================================
% ========================================================================
%         Theoretical analysis
% ========================================================================
% ========================================================================
% ========================================================================

\section{Update Masking as a Regularization} \label{subsec:implicit_reg}

\textbf{Notation.}
We denote by $\theta_t$ the model parameters at iteration $t$, which can be partitioned into $B$ disjoint blocks $\{ \theta_t^{(b)} \}_{b=1}^B$. 
Throughout this work, we treat each block as a distinct parameter unit.
We denote by $\nabla_b l(\theta)$ the gradient of the loss $l$ with respect to $\theta^{(b)}$. The stochastic gradient of $l(\cdot)$ at $t$ is denoted by $\vg_t \triangleq \{ \vg_t^{(b)} \}_{b=1}^B$.
$\mathbf{H}_{bb'}(\theta_t)$ denotes the $(b,b')$ block of the Hessian $\nabla^2 l(\theta_t)$. 
We let $(\gF_t)_{t \geq 0}$ be a filtration such that $\theta_t$ is $\gF_t$-measurable and $\E_t[\cdot] \triangleq \E [\cdot | \gF_t]$. 

\textbf{Background: Adaptive optimizers \& moment estimates.}
Adaptive optimizers adjust update magnitudes based on running statistics of past gradients, typically through diagonal or block-diagonal preconditioning. 
Given a learning rate $\eta_t$, these methods update parameters by $\theta_{t+1}^{(b)} = \theta_t^{(b)} - \eta_t \mD_t^{(b)} \vg_t^{(b)},$
where $\mD_t^{(b)}$ is a positive (often diagonal) matrix encoding per-parameter or per-block adaptation. 
Popular adaptive optimizers differ primarily in how $\mD_t^{(b)}$ is constructed or replace $\vg_t^{(b)}$ by a more stable gradient estimator, tracking two running averages of gradient information. The first-moment estimate is a moving average of the gradients $\mu_{t}^{(b)} = \beta_1 \mu_{t-1}^{(b)} + (1 - \beta_1) \vg_t^{(b)}$ and the second-moment estimates track the squared gradients $\vv_t^{(b)} = \beta_2 \vv_{t-1}^{(b)} + (1 - \beta_2) (\vg_t^{(b)})^2,$ where $\beta_1, \beta_2 \in (0,1)$ are constants.
RMSProp uses $\mD_t^{(b)} \approx \text{diag} (\vv_t^{(b)})^{-1/2}$.

\textbf{Main result.}
At iteration $t$, let $\Delta_t = (\Delta_t^{(1)},\ldots,\Delta_t^{(B)})$ denote a block-partitioned update direction from a base optimizer (e.g., $\eta_t \mD_t^{(b)} \vg_t^{(b)}$ for adaptive optimizers). 
SkipUpdate incorporates independent Bernoulli random variables $\{m_{t}^{(b)}\}_{b=1}^B$, where $m_{t}^{(b)} \sim \mathrm{Bernoulli}(p)$ with survival probability $p \in (0,1]$.
Then, stochastic block-wise masking is applied to the updates as 
\begin{equation} \label{eq:rsu_def}
    \tilde{\Delta}_t^{(b)} =  s_t^{(b)} m_{t}^{(b)}\Delta_t^{(b)} , 
\qquad b = 1,\ldots,B,
\end{equation}
yielding $\tilde{\Delta}_t \triangleq (\tilde{\Delta}_t^{(1)},\ldots,\tilde{\Delta}_t^{(B)})$. We set $s_t^{(b)} = 1/ p$ to make the masked update unbiased; that is, $\E_t [ \tilde{\Delta}_t^{(b)} ] = \Delta_t^{(b)}$.

Although random masking preserves the expected update, it fundamentally alters the higher-order behavior of the optimization dynamics.
In the following proposition, which is proved in Appendix \ref{appx:proof_prop1}, we show that SkipUpdate induces a curvature-dependent term in the expected loss decrease.

\begin{proposition}[] \label{lemma:implicit_reg}
Conditioned on $\gF_t$, the expected loss of SkipUpdate (cf. \eqref{eq:rsu_def}) is 
\begin{multline} \label{eq:implicit_reg}
    \mathbb{E}_t\left[l(\theta_t-\tilde{\Delta}_t) \right] = l(\theta_t-\Delta_t) + \\
    \sum_{b=1}^B \frac{1-p}{2p} (\Delta_t^{(b)})^\top  \mathbf{H}_{bb}(\theta_t) \Delta_t^{(b)}
    + \gO(\sum_{b=1}^B \|\Delta_t^{(b)}\|^3).
\end{multline}
\end{proposition}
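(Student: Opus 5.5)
We plan to prove the identity in \eqref{eq:implicit_reg} by second-order Taylor expansion of both $l(\theta_t-\tilde{\Delta}_t)$ and $l(\theta_t-\Delta_t)$ around the common base point $\theta_t$, and then take the conditional expectation over the masks. Throughout, $\Delta_t$ is treated as $\gF_t$-measurable, the masks $\{m_t^{(b)}\}_{b=1}^B$ are independent of $\gF_t$ and of each other, and we assume $l$ is three times differentiable with bounded third derivative in a neighborhood of $\theta_t$. This is the standard smoothness needed for the cubic remainder.

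\textbf{Expanding the masked loss.} We write
\begin{equation*}
l(\theta_t-\tilde{\Delta}_t) = l(\theta_t) - \nabla l(\theta_t)^\top \tilde{\Delta}_t + \tfrac12 \tilde{\Delta}_t^\top \nabla^2 l(\theta_t)\tilde{\Delta}_t + R(\tilde{\Delta}_t),
\end{equation*}
with $|R(\tilde{\Delta}_t)| \le C\|\tilde{\Delta}_t\|^3$. Since $s_t^{(b)}m_t^{(b)}\in\{0,1/p\}$, we have $\|\tilde{\Delta}_t\|^3 \le p^{-3}\|\Delta_t\|^3$. Moreover, $\|\Delta_t\|^3 = (\sum_b \|\Delta_t^{(b)}\|^2)^{3/2}$ is bounded by a constant (depending on $B$) times $\sum_b \|\Delta_t^{(b)}\|^3$ by norm equivalence on $\R^B$. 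Hence the expected remainder is $\gO(\sum_b\|\Delta_t^{(b)}\|^3)$, with the constant depending on $p$ and $B$, which are fixed. The linear term has expectation $-\nabla l(\theta_t)^\top\Delta_t$ by unbiasedness.

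\textbf{Quadratic term.} We write $\tilde{\Delta}_t^\top \nabla^2 l\,\tilde{\Delta}_t=\sum_{b,b'} s^2 m^{(b)}m^{(b')} (\Delta_t^{(b)})^\top \mathbf{H}_{bb'}\Delta_t^{(b')}$. For $b\ne b'$, independence gives $\E_t[m^{(b)}m^{(b')}]=p^2$, so these terms reproduce the dense cross terms exactly. For $b=b'$, $\E_t[(m^{(b)})^2]/p^2=1/p$, so each diagonal block contributes $\frac1p(\Delta_t^{(b)})^\top\mathbf{H}_{bb}\Delta_t^{(b)}$. Therefore
\begin{equation*}
\E_t[\tilde{\Delta}_t^\top \nabla^2 l\,\tilde{\Delta}_t] = \Delta_t^\top\nabla^2 l\,\Delta_t + \tfrac{1-p}{p}\sum_b (\Delta_t^{(b)})^\top\mathbf{H}_{bb}\Delta_t^{(b)}.
\end{equation*}

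\textbf{Recombining.} Expanding $l(\theta_t-\Delta_t)$ the same way gives $l(\theta_t)-\nabla l^\top\Delta_t+\frac12\Delta_t^\top\nabla^2 l\,\Delta_t+\gO(\|\Delta_t\|^3)$. Subtracting this from the expected masked expansion leaves exactly $\frac{1-p}{2p}\sum_b(\Delta_t^{(b)})^\top\mathbf{H}_{bb}(\theta_t)\Delta_t^{(b)}$ plus cubic remainders, which yields the claim.

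The only delicate point is the remainder control. It requires a uniform third-derivative bound on a ball containing both $\theta_t-\Delta_t$ and all $2^B$ possible masked points $\theta_t-\tilde{\Delta}_t$. If such a bound is unavailable globally, we would state local $C^3$ smoothness as an assumption. The expectation step itself is exact because there are only finitely many mask outcomes.
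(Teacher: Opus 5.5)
Your proposal is correct and follows the paper's proof. You Taylor-expand the masked loss around $\theta_t$, use unbiasedness for the linear term, and use mask independence to get exact cross-block terms and $1/p$-inflated diagonal blocks, then recombine with the expansion of $l(\theta_t-\Delta_t)$. You are slightly more careful than the paper, which leaves implicit both the cubic error from expanding $l(\theta_t-\Delta_t)$ in its ``regrouping'' step and the local $C^3$ smoothness needed for the remainder bound.
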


The term $\gR_t^{(b)} \triangleq \frac{1-p}{2p} (\Delta_t^{(b)})^\top \mathbf{H}_{bb}(\theta_t) \Delta_t^{(b)}$ admits a natural interpretation as a \emph{geometric regularizer}. 
It measures the local curvature of the loss along the update direction $\Delta_t^{(b)}$, weighted roughly by the inverse survival probability. 
Since directions of large positive curvature correspond to sharp increases of the loss, minimizing the expected post-update loss implicitly discourages updates that align with high-curvature directions within each block.

This block-structured regularization is particularly well-motivated in transformers, whose Hessians empirically exhibit pronounced block-diagonal structure \citep{zhang2024transformers,kunstner2024heavy,ormaniec2024does}.
Under this geometry, the dominant curvature interactions occur within blocks. 
Consequently, the block-wise quadratic penalty in Proposition \ref{lemma:implicit_reg} induces a principled second-order regularization that biases optimization toward flatter regions of the loss landscape. This preference for flatter minima has long been related to improved generalization \citep{hochreiter1997flat,keskar2016large} and is targeted by sharpness-aware optimization methods \citep{foret2020sharpness}.

\textbf{Why dense momentum updates matter.}
In Algorithm ~\ref{alg:magma_pseudocode}, momentum states are updated densely even when parameter updates are masked. 
This contrasts with recent subspace optimization methods \citep{pan2024lisa,zhao2024galore,luo2024badam}, in which both parameters and auxiliary states are updated only on selected coordinates.
For example, GaLore \citep{zhao2024galore} selects coordinates via leading gradient singular vectors and updates them over fixed batch intervals.
While this reduces optimizer memory, it repeatedly optimizes a suboptimal coordinate subset, contrary to classical coordinate descent insights \citep{nesterov2012efficiency,nutini2017let}.
Moreover, significant memory savings are not guaranteed in modern LLM training regimes, where  memory consumption is dominated by activations rather than parameters or optimizer states \citep{zhang2024revisiting,shamshoum2025compact}.

Further, SkipUpdate effectively yields a variance-reduced estimator of the true momentum due to the lazy update scheme, resulting in more stable search directions.
Consequently, as shown in Figure \ref{fig:dense-vs-sparse-momentum-update}, the dense momentum updates yield greater stability  with improved generalization, compared to the sparse momentum updates as in the memory-efficient subspace optimizers.

\textbf{Impacts of structured masking.}
Proposition \ref{lemma:implicit_reg} shows that the granularity of masking determines the structure of the induced curvature regularizer, with finer-grained masking progressively eliminating cross-coordinate interactions.
For instance, element-wise masking only penalizes diagonal Hessian entries, yielding a regularization term $\sum_{b=1}^B \sum_{i=1}^{\textrm{dim}(b)} \frac{1-p}{2p} \{ \Delta_t^{(b)} \}_{i}^2   \{ \mathbf{H}_{bb}(\theta_t) \}_{i,i},$ where $\textrm{dim}(b)$ is the dimensionality of the block $b$.
Interestingly, under 130M Llama pre-training on the C4 dataset, SkipUpdate achieves similar perplexities across masking granularities—21.78 (column-wise), 21.73 (element-wise), and 21.81 (block-wise)—all substantially outperforming the RMSProp baseline (22.64).
We conjecture that this near-equivalence reflects the limited ability of diagonal preconditioning to exploit dense within-block curvature, rendering finer-grained masking of marginal practical benefit.
Thus, we adopt block-wise masking for its favorable computational properties, as skipping entire blocks enables efficient operation pruning.

% ========================================================================
% ========================================================================
% ========================================================================
%         Method
% ========================================================================
% ========================================================================
% ========================================================================

\section{Momentum-Aligned Update Masking} \label{sec:mainsection}
SkipUpdate applies a homogeneous masking operation across all blocks.
However, parameters in transformers exhibit substantial heterogeneity, such as markedly different Hessian spectra \citep{zhang2024transformers} and gradient variances \citep{orvieto2025search}. 
Such heterogeneity strongly influences optimization dynamics and therefore motivates a more refined block-adaptive masking strategy, while preserving seamless compatibility with modern adaptive optimizers.

In stochastic optimization, gradient components consistent across iterations tend to carry meaningful optimization signal, whereas rapidly fluctuating components are often dominated by stochastic noise \citep{polyak1964some,riedmiller1993direct,bottou2018optimization}.
Further, recent work interprets moment estimates through the lens of online variational inference \citep{orvieto2025search}. Under this view, the probability of negative alignment satisfies
$\mathbb{P}(\mu^\top g < 0) = \Phi \left(-\frac{\|\mu\|}{\sigma} \right),$ which decays exponentially in the signal-to-noise ratio $\|\mu\|/\sigma$.
Thus, negative alignment events represent statistically abnormal fluctuations, providing a natural signal for identifying destabilizing updates.
This motivates \emph{momentum-gradient alignment} as a principled criterion for modulating update magnitudes.

Based on this insight, we propose Magma, which leverages block-wise momentum-gradient alignment to control the masking process.
Specifically, for each block $b$ at iteration $t$, Magma computes an alignment score $\tilde{s}_t^{(b)} \in (0,1)$ as
\begin{equation} \label{eq:masking_prob}
    \tilde{s}_t^{(b)} = \sigmoid\left( \textrm{cossim}(\mu_t^{(b)}, \vg_t^{(b)}) / \tau \right),
\end{equation}
where $\mu_t^{(b)}$ is the first-moment estimate, $\tau > 0$ is a temperature parameter, and $\textrm{cossim}(\cdot, \cdot)$ is the cosine similarity. 
Cosine similarity is adopted for its scale-invariant property, which is particularly effective in LLM training where gradient norms vary substantially across both parameter blocks and iterations \citep{huang2025spam, wen2025sron}.

Then, Magma modulates the noisy (masked) update based on the alignment score $s_t^{(b)}$, applying the update rule $\tilde{\Delta}_t^{(b)} =  s_t^{(b)} m_{t}^{(b)}\Delta_t^{(b)}, \; b = 1,\ldots,B,$ with $s_t^{(b)} = 0.9 s_{t-1}^{(b)} + 0.1 \tilde{s}_t^{(b)}$ being the exponential moving average.
As a result, Magma encourages coherent optimization trajectories with large $s_t^{(b)}$ values and mitigates oscillatory updates with small $s_t^{(b)}$ values. 
We emphasize that Magma is a drop-in wrapper that multiplies $s_t^{(b)} m_{t}^{(b)}$ into the existing update direction $\Delta_t^{(b)}$ produced by (adaptive) optimizers, introducing no additional memory or computational overhead.
Therefore, practitioners can adopt Magma in existing training pipelines with minimal code changes and no additional resource requirements.

Similar principles underlie Cautious Optimizer \citep{liang2024cautious} and MGUP \citep{chang2025mgup}, which mask or attenuate parameter updates whose stochastic gradients have opposite signs to the first-moment estimate. Similarly, RPROP \citep{riedmiller1993direct} adapts step sizes based on the temporal consistency of gradient signs.
However, unlike Magma, these methods lack structured stochastic masking, and therefore do not induce the curvature-dependent geometric regularization established in \S \ref{subsec:implicit_reg}.

Multiplication by the alignment score (damping) introduces bias into the masked update but substantially improves training stability (cf. Figure \ref{fig:dense-vs-sparse-momentum-update} in Appendix). We tested unbiased alternatives, such as using $\tilde{s}_t^{(b)}$ as the survival probability with $1/\tilde{s}_t^{(b)}$ rescaling, but these consistently resulted in unstable training. Developing a stable yet unbiased masking scheme for Magma remains an important future direction.

% ========================================================================
% ========================================================================
% ========================================================================
%         Experiments
% ========================================================================
% ========================================================================
% ========================================================================
\section{Experiments}

\begin{table}[t]
\caption{Llama 2 pre-training results on the C4 dataset. 
Validation perplexity is reported for four model scales (60M to 1B). $\dagger$ denotes results from \citet{li-etal-2025-taming}; remaining results are ours. RMSProp diverges for the 1B model within the learning rate search space.}
\centering
\resizebox{\columnwidth}{!}{%
\begin{tabular}{lcccc}
\toprule
\textbf{Method} & {\textbf{60M}} & {\textbf{130M}} & {\textbf{350M}} & {\textbf{1B}} \\
\midrule 
Adam & 30.79 & 24.77 & 18.42 & 16.35 \\ 
C-Adam & 29.70 & 23.59 & 18.58 & 15.92 \\
Adam+SGG$\dagger$ & 30.31 & 22.18 & 17.28 & 14.30 \\ 
\rowcolor{Gray} Adam+Magma & \pmb{29.09} & \pmb{22.08} & \pmb{16.41} & \pmb{13.71} \\
\midrule 
LaProp & 29.98 & 23.07 & 18.56 & 16.38 \\
\rowcolor{Gray} LaProp+Magma & \pmb{29.05} & \pmb{22.16} & \pmb{16.37} & \pmb{13.82} \\
\midrule 
Adafactor$\dagger$  & 32.57 & 23.98 & 17.74 & 15.19 \\ 
APOLLO$\dagger$  & 31.55 & 22.94 & 16.85 & 14.20 \\ 
APOLLO+SGG$\dagger$  & 30.18 & 22.52 & 16.54 & 13.95 \\ 
Muon$\dagger$ & 28.93 & 22.34 & 17.09 & 14.52 \\
RMSProp & 29.29 & 22.64 & 17.47 & - \\
\midrule 
\rowcolor{Gray} RMSProp+Magma & \pmb{28.55} & \pmb{21.66} & \pmb{16.16} &  \pmb{13.19} \\ 
\bottomrule
\end{tabular}
}
\label{table:benchmark_c4}
\end{table}

\subsection{Pre-Training Llama}  \label{subsec:pre-training-llms}

We present Llama 2 pre-training results on the C4 dataset \citep{raffel2020exploring} across model sizes of 60M, 130M, 350M, and 1B, following the standardized experimental setup established by \citet{zhao2024galore} (See Appendix \ref{appx:exp_details} for details). For all experiments, we set $\tau = 2$ and apply Magma updates exclusively to the attention and MLP layers. We found that this single configuration performs robustly across a wide range of settings, as supported by the ablation studies in Appendix \ref{appx:ablation_studies}.

Table~\ref{table:benchmark_c4} compares Magma against a wide range of algorithms including Adafactor \citep{shazeer2018adafactor}, Adam, APOLLO \citep{zhu2025apollo}, LaProp \citep{ziyin2020laprop}, and RMSProp, as well as matrix-based optimizers such as Muon and SOAP \citep{vyas2024soap}, and optimization enhancers such as Scaling with Gradient Grouping (SGG) \citep{li-etal-2025-taming} and Cautious Optimizer (C-Adam).

Table~\ref{table:benchmark_c4} demonstrates that Magma consistently improves performance of all base optimizers across model scales. 
When applied to Adam and LaProp, Magma yields significant perplexity reductions compared to their vanilla counterparts and competing enhancers like SGG. Most notably, \textbf{RMSProp+Magma} attains the lowest perplexity across all model sizes (60M--1B), establishing a new state-of-the-art for this benchmark. It outperforms computationally intensive matrix-based optimizers such as Muon and SOAP, as well as sophisticated enhancers like APOLLO+SGG.

Magma proves to be the most effective optimization enhancer among the evaluated methods. When applied to Adam, Magma consistently outperforms alternative enhancement strategies, including Cautious Adam (C-Adam) and Scaling with Gradient Grouping (Adam+SGG). For instance, at the 1B parameter scale, Adam+Magma achieves a validation perplexity of 13.81, significantly surpassing Adam+SGG (14.30) and C-Adam (15.92). This indicates that the masking mechanism in Magma provides a more robust regularization signal than the gradient modification techniques employed by competing enhancers.

Moreover, Magma exhibits favorable scaling: its performance relative to base optimizers increases with model size. As larger models exhibit increasingly irregular and nonsmooth loss landscapes, this trend supports the interpretation of random masking as an effective form of geometric regularization.

\subsection{Pre-Training Nano MoE}
In accordance with the widespread adoption of sparse mixture-of-experts (MoE) architecture in modern LLMs \citep{shazeer2017outrageously,lepikhin2020gshard,fedus2022switch,du2022glam}, we validate Magma in the Nano MoE framework \citep{wolfe2024nanomoe}.
This benchmark involves pre-training an MoE transformer on the OpenWebText dataset \citep{pile}, and we follow the standard configuration and training protocol (see Appendix \ref{appx:nano_moe-exp_details} for details).

MoE models are known to induce significantly more complex and nonsmooth optimization due to dynamic load balancing, sparse token-to-expert routing, and non-uniform gradient flow across parameters \citep{shazeer2017outrageously, fedus2022switch, zoph2022st}. 
Therefore, the MoE architectures serve as a particularly stringent testbed for evaluating robustness of optimization algorithms.

\begin{figure}[t]
% \vskip -0.1in
\begin{center}
\includegraphics[width=0.49\textwidth]{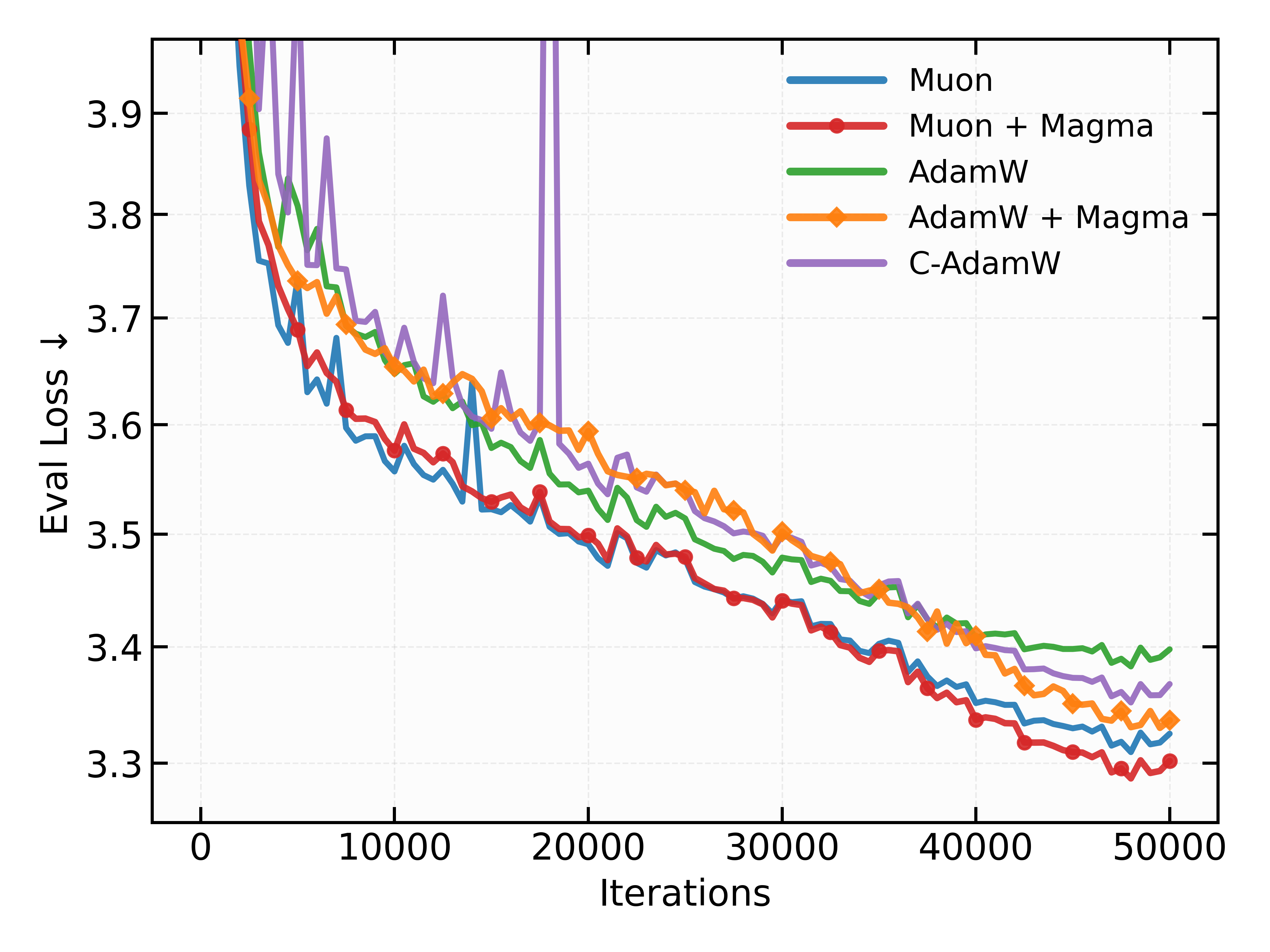}
\end{center}
\vskip -0.15in
\caption{Optimization trajectories of pre-training the Nano MoE model on OpenWebText.} 
\label{fig:nano-moe}
\vskip -0.15in
\end{figure}

Our results in Figure \ref{fig:nano-moe} show that Magma consistently improves performance of both Adam and Muon in the MoE setting. When applied to Adam, Magma exhibits slower convergence during intermediate training but ultimately achieves superior final performance. Consistent with Llama pre-training, Magma also significantly outperforms the Cautious Optimizer (C-Adam), which similarly leverages momentum–gradient alignment, corroborating the effectiveness of Magma’s geometric regularization through structured random masking (\S \ref{sec:mainsection}).

Notably, when combined with Muon, Magma attains the best overall performance, substantially outperforming all baselines. 
This suggests that stochastic masking-based update modulation in Magma and structured preconditioning operate on largely orthogonal aspects of the optimization process, shaping both training dynamics and convergence geometry in complementary ways. Given the increasing reliance on sophisticated preconditioners such as Muon \citep{jordan2024muon} and SOAP \citep{vyas2024soap} in modern MoE-based LLM training, these results highlight Magma as a robust and effective enhancement that can be seamlessly integrated into large-scale optimization pipelines.

\begin{figure}
% \vskip -0.1in
\begin{center}
\includegraphics[width=0.235\textwidth]{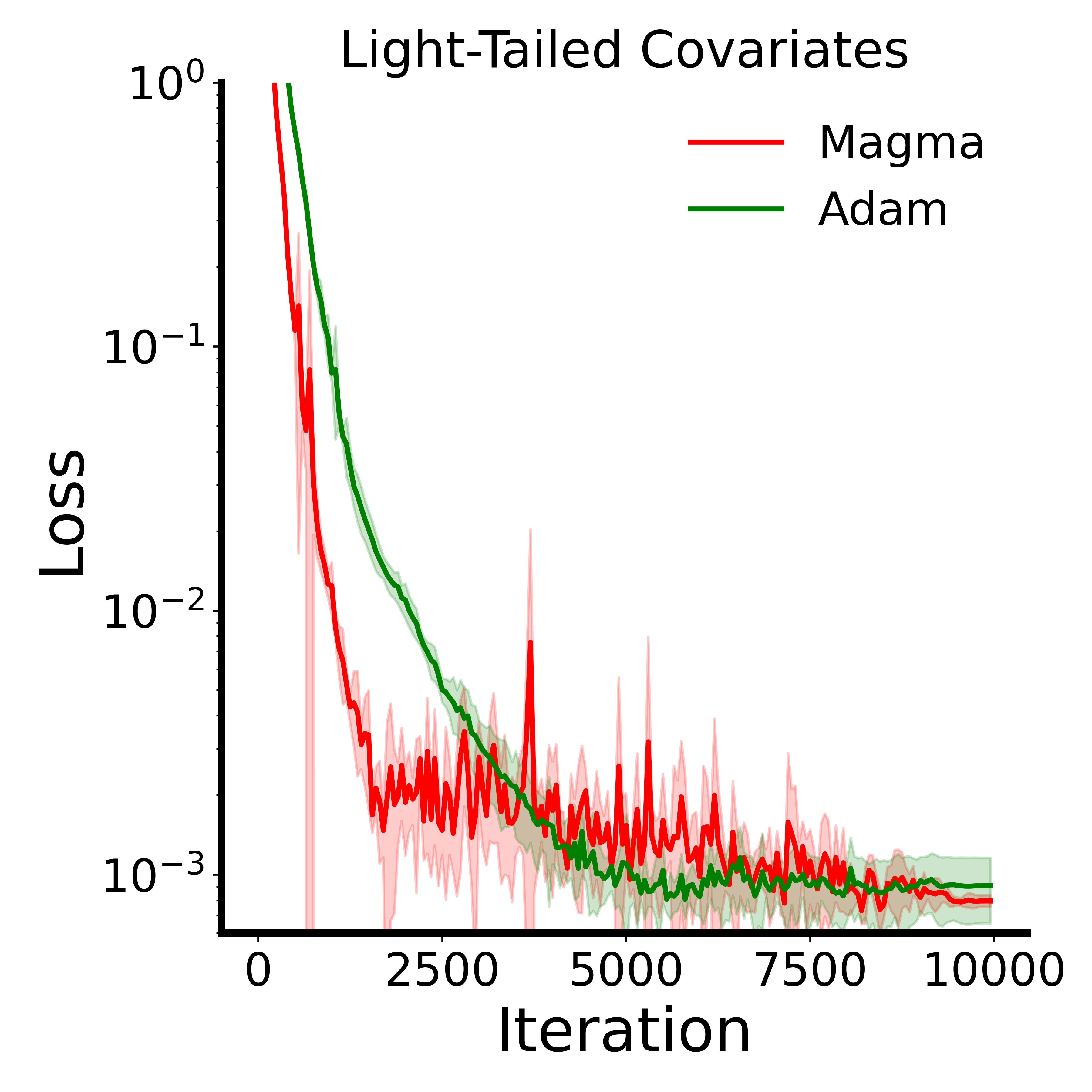}
\includegraphics[width=0.235\textwidth]{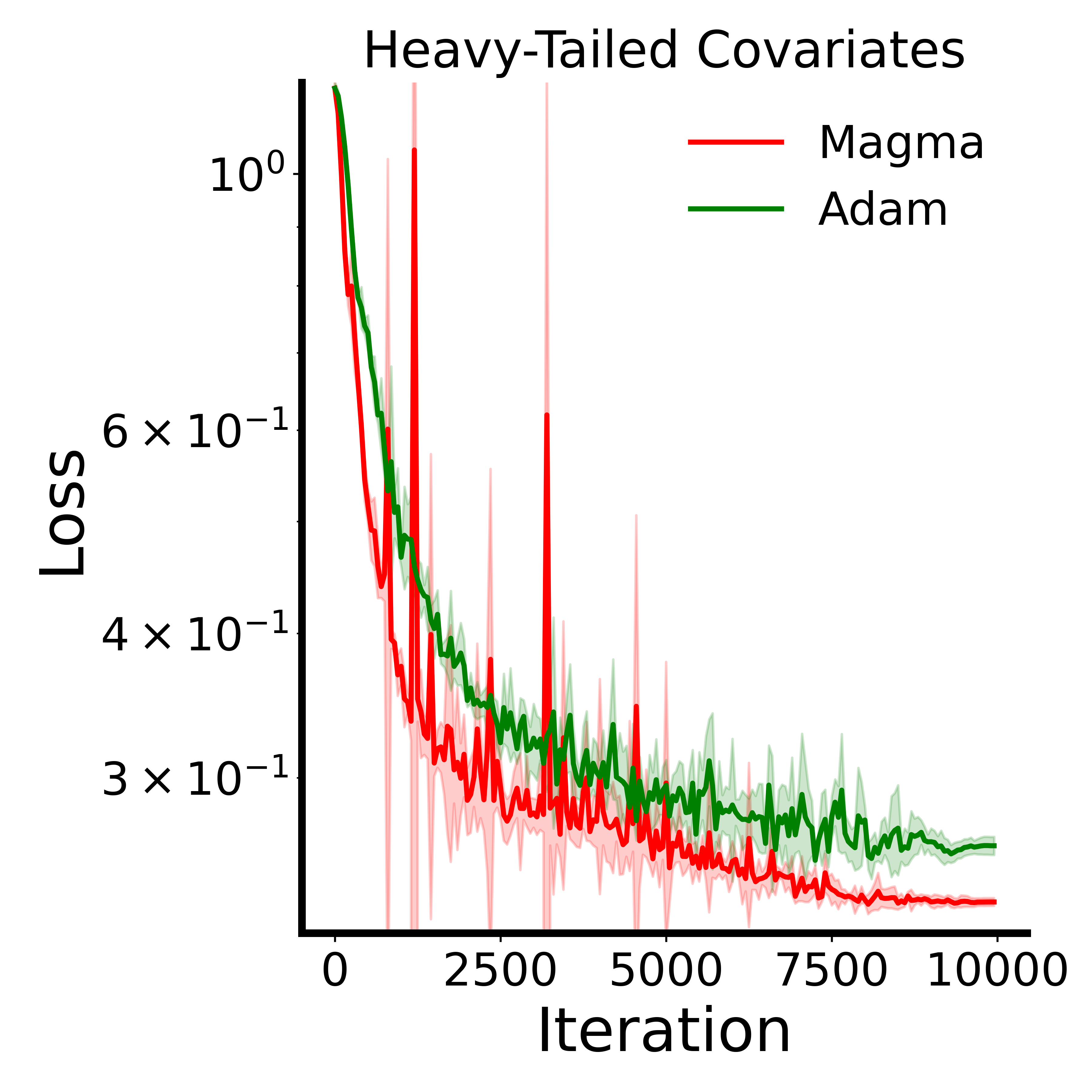}
\includegraphics[width=0.235\textwidth]{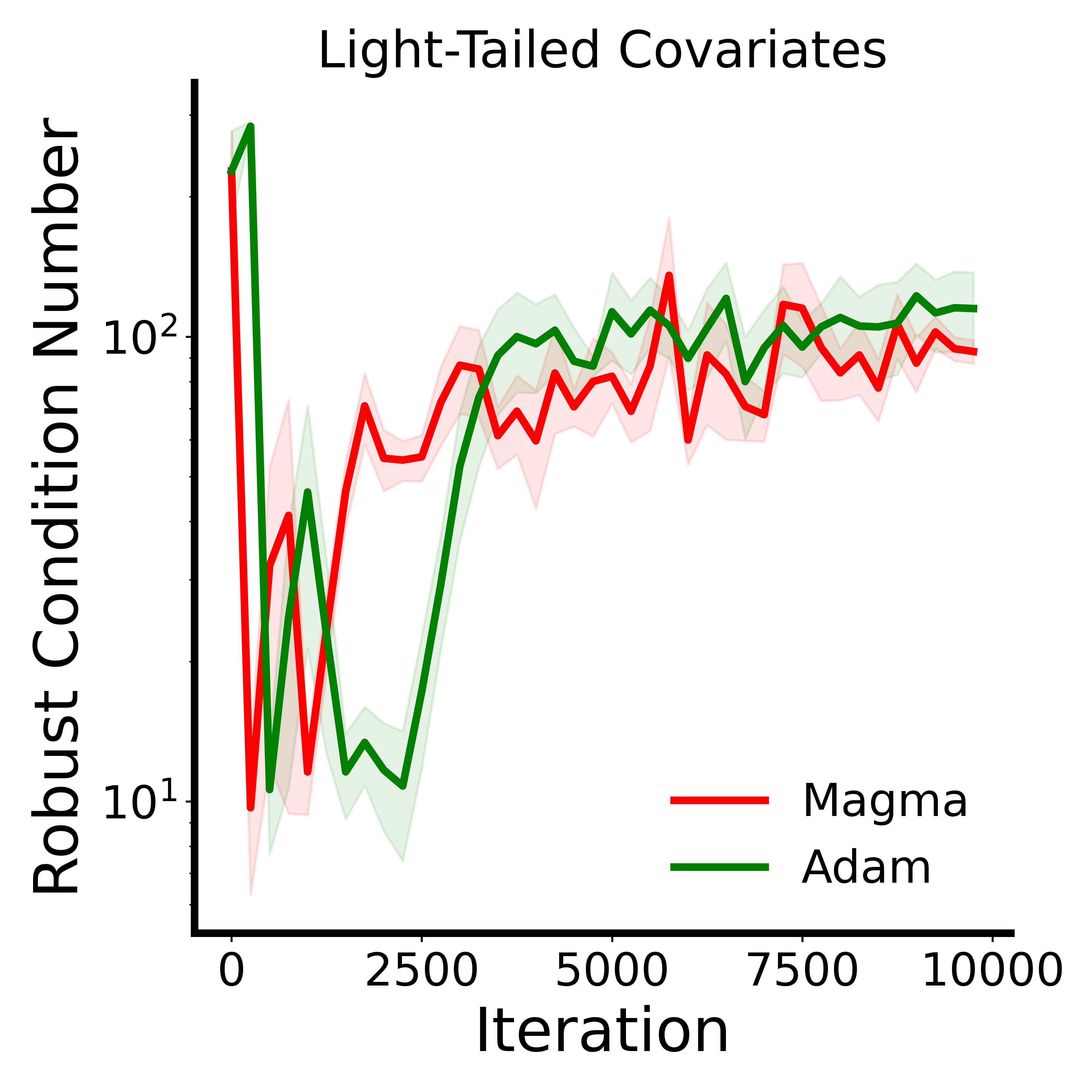}
\includegraphics[width=0.235\textwidth]{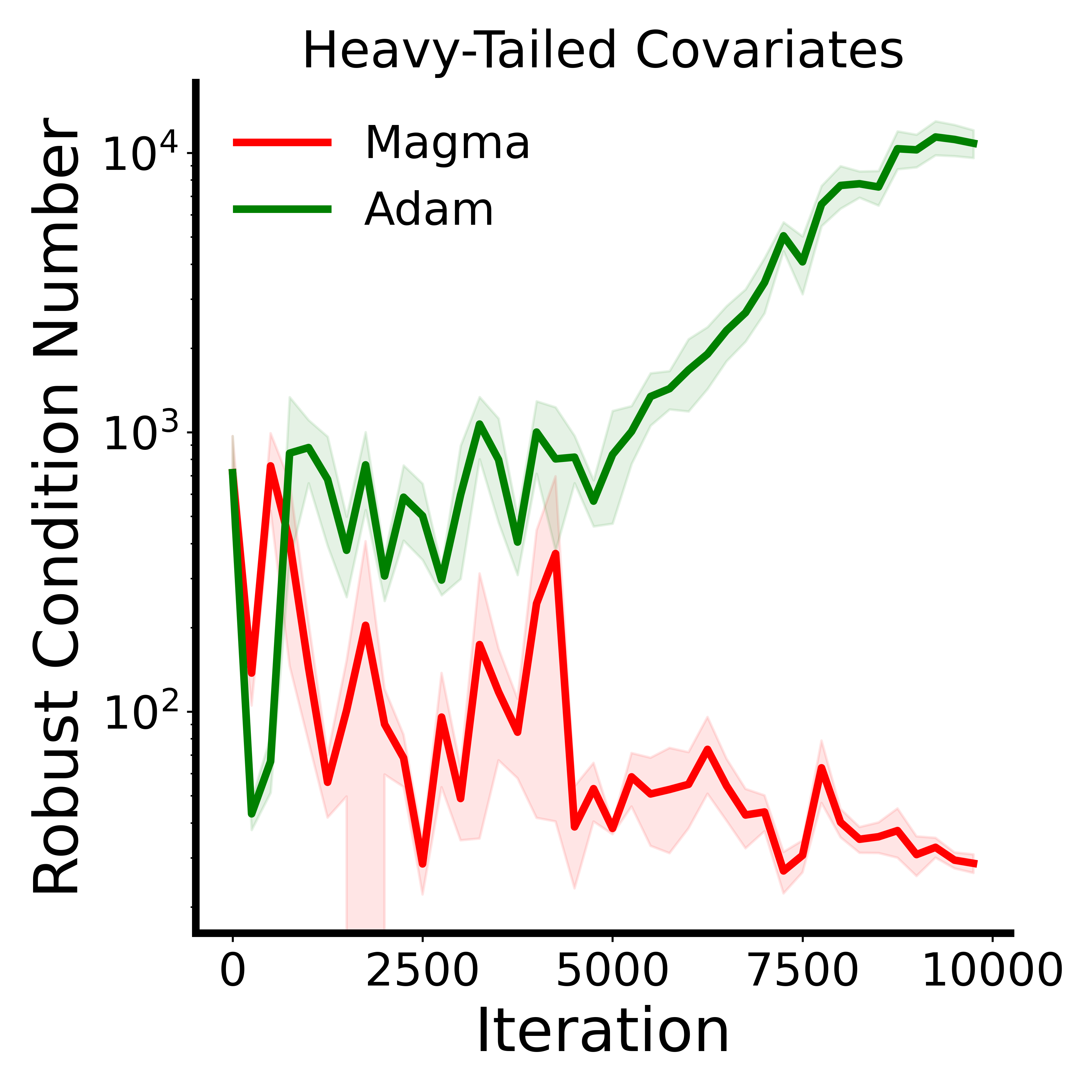}
\end{center}
\vskip -0.1in
\caption{\textit{Magma on light-tailed and heavy-tailed data distributions.} 
\underline{\textbf{\emph{Top:}}} Optimization trajectories for Adam and Magma. 
\underline{\textbf{\emph{Bottom:}}} Robust condition number defined as the ratio between the maximum and median eigenvalues of the loss Hessian.  
}
\label{fig:heavy-tailed}
\vskip -0.1in
\end{figure}

\begin{figure*}
\vskip -0.06in
\begin{center}
\includegraphics[width=0.98\textwidth]{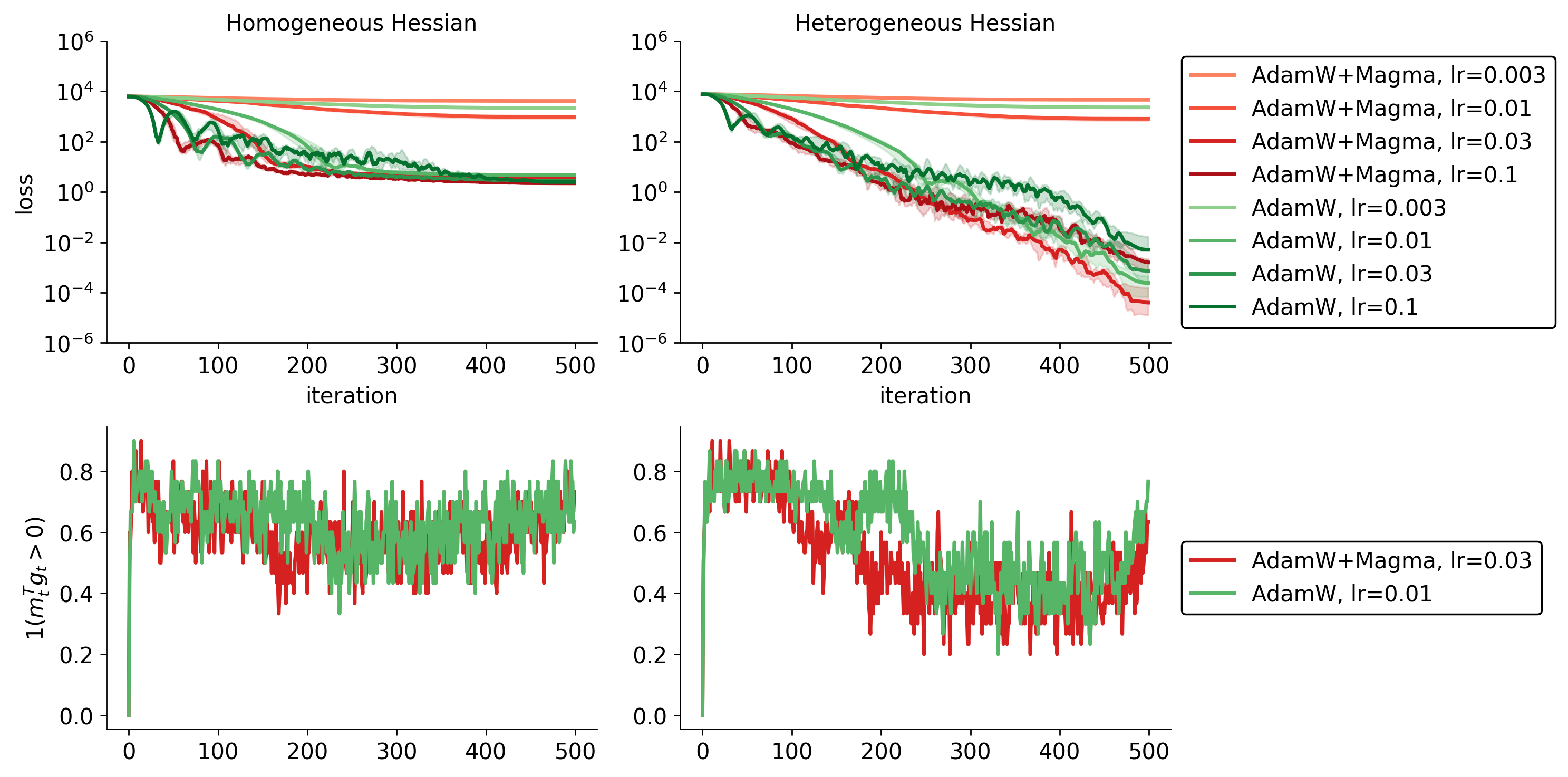}
\end{center}
\caption{\textit{Magma on homogeneous and heterogeneous quadratics.} 
\underline{\textbf{\emph{Top:}}} Optimization trajectories for AdamW and Magma on quadratic objectives with identical eigenspectra but different block structures. 
\underline{\textbf{\emph{Bottom:}}} Average gradient–momentum alignment per block. 
}
\label{fig:quadratic}
\vskip -0.1in
\end{figure*}

\subsection{Magma under Heavy-Tailed Gradient Noises}
A salient feature of training autoregressive language models is the presence of heavy-tailed stochastic gradient noise \citep{zhang2019gradient, kunstner2024heavy}. 
To isolate and study the effect of heavy-tailed noise on optimization dynamics, we evaluate Magma using the controlled benchmark proposed in  \citet{ahn2023linear}, which provides an abstraction of transformer training while faithfully reproducing key empirical phenomena observed in practice, including the performance gap between Adam and SGD for LLM pre-training.
The benchmark considers training a simplified linear transformer to solve a random linear regression task in a meta learning format \citep{garg2022can,von2023transformers,akyurek2022learning}. 
In this benchmark, we consider two data regimes, namely, \textit{light-tailed} and \textit{heavy-tailed} settings. Under the heavy-tailed setting, we modify the input sampling distribution to amplify tail behavior in the covariates, thereby inducing heavy-tailed stochastic gradient noise during optimization. We refer to Appendix \ref{appx:ahn_linear_benchmark} for full experimental details.

Figure \ref{fig:heavy-tailed} (top) compares the optimization trajectories of Adam and Magma under normal and heavy-tailed covariates. 
While both methods perform similarly under normal noise, Magma significantly outperforms Adam in the heavy-tailed setting, offering insight into its strong empirical performance in LLM pre-training, where heavy-tailed data is ubiquitous \citep{kunstner2024heavy,ahn2023linear}. 
Notably, this result is appealing given Adam’s known robustness to heavy-tailed distributions.

To further analyze this behavior, Figure \ref{fig:heavy-tailed} (bottom) reports the robust condition number along training trajectories \citep{jiang2023does}. Under heavy-tailed noise, Magma consistently attains substantially smaller condition numbers, indicating that its updates remain confined to well-conditioned regions of the loss landscape. This reflects the curvature-dependent geometric regularization induced by scaled random masking, which selectively suppresses curvature- and noise-dominated directions, thereby enhancing robustness to extreme gradient fluctuations.

\subsection{Magma on Heterogeneous Quadratics}
To further validate the effectiveness of Magma in heterogeneous landscapes, we evaluate Magma on a controlled quadratic benchmark adopted in recent works \citep{zhang2024transformers, orvieto2025search}.
The benchmark consists of two quadratic objectives with \emph{identical eigenspectra} but different block-wise curvature structure. In both cases, the loss takes the form $l(\vw) = \tfrac{1}{2} \vw^\top \mH \vw,$ where $\mH$ has eigenvalues spanning three orders of magnitude. The key distinction lies in how these eigenvalues are arranged: in the \emph{homogeneous} setting, eigenvalues of similar scale are grouped within blocks, whereas in the \emph{heterogeneous} setting, each block mixes eigenvalues with vastly different magnitudes, inducing strong curvature misalignment. 
Despite its simplicity, this heterogeneous structure qualitatively mimics the loss geometry observed in autoregressive transformers, in contrast to the more scale-separated curvature typical of CNNs. Full details are provided in Appendix \ref{appx:hetero_quad_setup}.

Figure~\ref{fig:quadratic} (top) presents optimization trajectories for both Hessian structures.
On the homogeneous problem, Magma and AdamW exhibit comparable performance, with Magma converging slightly faster in the early stages.  
However, in the heterogeneous problem, while AdamW is known to substantially outperform non-adaptive methods such as SGD \citep{zhang2024transformers,orvieto2025search}, Magma achieves faster convergence and a lower final loss than AdamW.  
This mirrors the gains observed in the pre-training experiments (cf. Table \ref{table:benchmark_c4}), suggesting that Magma is particularly effective in regimes where curvature is both ill-conditioned and misaligned across parameter subspaces.

Importantly, this advantage does not extend to architectures whose curvature resembles the homogeneous case. 
For example, when applied to ResNet-50 on CIFAR-10 classification, Magma does not improve over AdamW (94.46\% vs.\ 93.82\% test accuracy after 100 epochs with carefully tuned configurations), reinforcing the hypothesis that its benefits are specific to transformer-like loss geometry.

Figure~\ref{fig:quadratic} (bottom) analyzes the average gradient–momentum alignment within each block for AdamW and Magma.  
As expected, the homogeneous Hessian exhibits higher alignment across all blocks, reflecting its more benign conditioning.
Notably, although Magma explicitly suppresses updates that conflict with accumulated momentum, it does not significantly increase the alignment score itself.
This indicates that Magma’s gains arise not from altering momentum statistics, but from enforcing consistency between instantaneous gradients and long-term descent directions.

% % ========================================================================
% % ========================================================================
% % ========================================================================
% %         Discussion
% % ========================================================================
% % ========================================================================
% % ========================================================================
\section{Discussion} \label{sec:convergence_analysis}
We analyze the effect of scaled random masking in Magma through the lens of classical optimization theory. 

\textbf{Setup.}
We consider $\theta \in \R^d$, which can be decomposed as $\theta = (\theta^{(1)}, \theta^{(2)}, \cdots, \theta^{(B)})$ with $\theta^{(b)} \in \R^{d'}$ and $B d' = d$. We assume the objective is lower bounded: $l_* \triangleq \min_\theta l(\theta) > - \infty$. 
Let $\gM_t(\theta) \triangleq (\gM_t(\theta)^{(1)}, \cdots, \gM_t(\theta)^{(B)})$ be a scaled masking operator defined as $\gM_t(\vg)^{(b)} \triangleq \frac{m_{t}^{(b)}}{p} \vg^{(b)}$, where $m_{t}^{(b)} \sim \mathrm{Bernoulli}(p)$. 
We define an alignment score vector $\vs_t \in \R^B \triangleq (s_t^{(1)}, \cdots, s_t^{(B)})$. Also, we use an operation $ \vs_t \otimes \mI_{d'}$, where $\otimes$ is the Kronecker product and $\mI_{d'}$ is the $d' \times d'$ identity matrix, to scale each block update $\gM_t(g)^{(b)}$ by $s_t^{(b)}$ through $(\vs_t \otimes \mI_{d'}) \gM_t(\theta)$. For brevity, we denote $\mS_t \triangleq \vs_t \otimes \mI_{d'}$.

Throughout this section, we consider a constant learning rate SGD, isolating the impact of stochasticity and masking from the adaptivity of Adam-type methods: (Vanilla SGD) $\theta_{t+1} = \theta_t - \eta \vg_t$ for $t = 0, 1, \cdots$; (Magma) $\theta_{t+1} = \theta_t - \eta \mS_t \gM_t(\vg_t)$ for $t = 0, 1, \cdots$.
Extensions to adaptive optimizers follow the similar descent lemma framework and do not alter the core argument; see \citet{wang2022divergence,crawshaw2022robustness} for related analyses.
All proofs of claims are presented in Appendix \ref{appx:proofs}.

We begin by defining technical assumptions for the analysis.

\begin{assumption} \label{assumption:layerwise_smooth}
    There exist constants $L^{(b)} \geq 0$ such that for all $\theta \in \R^d$, $b \in [B]$, $\vu \in \R^{d'}$, $l(\theta + \mU_b \vu) \leq l(\theta) + \vu^\top \nabla_b l(\theta) + \frac{L^{(b)}}{2} \| \vu  \|^2,$
    where $\mU_b \vu \in \R^d$ denotes the vector with block $b$ equal to $\vu$ and other blocks 0.  
\end{assumption}

In Assumption \ref{assumption:layerwise_smooth}, the block-wise smoothness bound naturally captures heterogeneous nature of the transformer loss landscape. We define the block-wise smoothness weighted semi-norm as $\| \vg_t \|_L^2 \triangleq \sum_{b=1}^{B} L^{(b)} \| \vg_t^{(b)} \|^2$.

\begin{assumption} \label{assumption:layerwise_second_moment}
    There exist constants $\sigma_b \geq 0$ such that for all $\theta \in \R^d$ and $b \in [B]$, $\E [ \| \vg^{(b)}(\theta) \|^2 ] \leq \| \nabla_b l(\theta) \|^2 + \sigma_b^2,$ where $\vg^{(b)}(\theta)$ denotes the stochastic gradient for $\theta^{(b)}$.
\end{assumption}

Assumption \ref{assumption:layerwise_second_moment} is standard in stochastic optimization and captures bounded variance of the stochastic gradient.

\textbf{Descent lemma.} 
The following descent lemma characterizes the per-iteration decrease of a smooth objective.

\begin{lemma} \label{prop:magma_descent}
    Under Assumption \ref{assumption:layerwise_smooth}, Magma satisfies for all $t$, 
    \begin{equation} \label{eq:one_step_descent_magma}
        \E_t[l(\theta_{t+1})] 
            \leq l(\theta_t) - \eta \E_t [\vg_t^\top \mS_t \nabla l(\theta_t)] + \frac{\eta^2}{2p} \E_t [\| \mS_t \vg_t \|_L^2 ].
    \end{equation}
\end{lemma}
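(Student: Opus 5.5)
The plan is to apply the block-wise smoothness upper bound to the full Magma step, then take the expectation over the masks $m_t^{(b)}$ first, conditioning on the stochastic gradient $\vg_t$ and the alignment scores $\vs_t$, and finally apply the tower property under $\E_t$.

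\emph{Step 1 (smoothness bound).} Write the step as $\theta_{t+1}=\theta_t-\eta\,\mS_t\gM_t(\vg_t)$. Its $b$-th block is $\Delta^{(b)}\triangleq \eta\, s_t^{(b)}\frac{m_t^{(b)}}{p}\vg_t^{(b)}$. I would use Assumption~\ref{assumption:layerwise_smooth} in its block-separable form:
\begin{equation*}
l(\theta_t-\Delta)\le l(\theta_t)-\Delta^\top\nabla l(\theta_t)+\sum_{b=1}^B\frac{L^{(b)}}{2}\|\Delta^{(b)}\|^2 .
\end{equation*}
This is the natural multi-block reading of the assumption: it holds, for instance, when the Hessian is dominated by $\mathrm{diag}(L^{(b)}\mI_{d'})$, which is the block-diagonal picture motivated in \S\ref{subsec:implicit_reg}. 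This is the step I expect to be the main obstacle. A literal single-block statement does not by itself control simultaneous perturbations of several blocks. Applying it block-by-block along a path $\theta_t\to\theta_t-\mU_1\Delta^{(1)}\to\cdots$ would produce gradients evaluated at intermediate points rather than at $\theta_t$. I would therefore state explicitly that the separable majorization is what is being used.

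Substituting $\Delta$ into this bound gives
\begin{equation*}
l(\theta_{t+1})\le l(\theta_t)-\eta\,\gM_t(\vg_t)^\top\mS_t\nabla l(\theta_t)+\frac{\eta^2}{2}\sum_{b}L^{(b)}(s_t^{(b)})^2\frac{(m_t^{(b)})^2}{p^2}\|\vg_t^{(b)}\|^2 .
\end{equation*}
Here I use that $\mS_t$ is symmetric, being diagonal.

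\emph{Step 2 (average over masks).} The masks are drawn independently of $\vg_t$ and of $\vs_t$. Indeed, $s_t^{(b)}$ depends only on $\mu_t^{(b)}$, $\vg_t^{(b)}$ and $s_{t-1}^{(b)}$, none of which involve $m_t$. Hence, conditionally on $(\mathcal F_t,\vg_t)$, two identities hold.
\begin{itemize}[leftmargin=*]
\item The masked gradient is unbiased: $\E[m_t^{(b)}/p]=1$, so $\E[\gM_t(\vg_t)]=\vg_t$.
\item The second moment picks up a factor $1/p$: since $(m_t^{(b)})^2=m_t^{(b)}$, we have $\E[(m_t^{(b)})^2/p^2]=1/p$.
\end{itemize}
Consequently the linear term becomes $-\eta\,\vg_t^\top\mS_t\nabla l(\theta_t)$. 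The quadratic term becomes $\frac{\eta^2}{2p}\sum_b L^{(b)}\|s_t^{(b)}\vg_t^{(b)}\|^2=\frac{\eta^2}{2p}\|\mS_t\vg_t\|_L^2$.

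\emph{Step 3 (tower property).} Taking $\E_t$ of the resulting inequality and using that $l(\theta_t)$ and $\nabla l(\theta_t)$ are $\mathcal F_t$-measurable yields \eqref{eq:one_step_descent_magma}. Apart from justifying the multi-block smoothness bound in Step~1, the argument is routine bookkeeping with the independence of $m_t$ from $(\vg_t,\vs_t)$.
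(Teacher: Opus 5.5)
Your proof follows the paper's proof step for step: the separable smoothness bound, then averaging over the independent masks via $\E[m_t^{(b)}/p]=1$ and $\E[(m_t^{(b)})^2/p^2]=1/p$, then the tower property. Your concern about Step~1 is well founded. The paper justifies the separable bound by ``applying Assumption~\ref{assumption:layerwise_smooth} sequentially,'' which fails for exactly the reason you give (gradients at intermediate points).

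In fact the lemma itself is false under Assumption~\ref{assumption:layerwise_smooth} alone. Take scalar blocks and $l(x,y)=(x-y)^2$, which satisfies the assumption with $L^{(1)}=L^{(2)}=2$, at $\theta_t=(1,0)$ with exact gradients. There the left side exceeds the right side by $8\eta^2 s_t^{(1)}s_t^{(2)}>0$ for every $p\in(0,1]$.

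So state the separable majorization (e.g.\ $\nabla^2 l\preceq\mathrm{diag}(L^{(1)}\mI_{d'},\ldots,L^{(B)}\mI_{d'})$ everywhere) as a strictly stronger hypothesis that replaces Assumption~\ref{assumption:layerwise_smooth}, not as a ``reading'' of it. With that change your argument is complete and correct.
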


Compared to the vanilla SGD's bound 
$\E_t[l(\theta_{t+1})]  \leq l(\theta_t) - \eta \| \nabla l(\theta_t) \| + \frac{\eta^2}{2} \E_t [\| \vg_t \|_L^2 ]$ (obtained by $p=1$ and $\mS_t = \mI_{d}$), Magma differs from two terms: 1) the first-order term is reduced $\vg_t^\top \mS_t \nabla l(\theta_t) \le  \vg_t^\top \nabla l(\theta_t)$ a.e. since $\mS_t\preceq I_d$;
2) quadratic penalty uses effective smoothness constants. To see this, we define the second-moment scaling factor: $\rho_t^{(b)} \triangleq \frac{\E_t [\| s_t^{(b)} \vg_t^{(b)} \|^2] }{ \E_t [ \| \vg_t^{(b)} \|^2]}.$
Then, the quadratic penalty in \eqref{eq:one_step_descent_magma},  
\begin{equation} \label{eq:eff_smoothness}
    \frac{\eta^2}{2p} \E_t [\| \mS_t  \vg_t \|_L^2 ] 
        = \frac{\eta^2}{2} \sum_{b=1}^{B} \frac{\rho_t^{(b)} L^{(b)}}{p} \E_t [\| \vg_t^{(b)} \|^2 ], 
\end{equation}
describes how Magma rescales each block-wise smoothness constant as $L^{(b)} \mapsto \tilde{L}_{t}^{(b)} \triangleq  \frac{\rho_t^{(b)} L^{(b)}}{p}$. 
Accordingly, we define $\| \vg_t \|_{\tilde{L}_t}^2 \triangleq \sum_{b=1}^{B} \tilde{L}_t^{(b)} \| \vg_t^{(b)} \|^2$ and $\sigma^2_{\tilde{L}_t} \triangleq \sum_{b=1}^{B} \tilde{L}_{t}^{(b)} \sigma_b^2 $.

\textbf{Convergence analysis.} 
To derive a global convergence rate, we require a lower bound on the descent term in \eqref{eq:one_step_descent_magma}. The next lemma provides such a bound and defines the effective descent efficiency factors.

\begin{lemma} \label{sup_lem:descent_lower_bound}
    Under Assumption \ref{assumption:layerwise_second_moment}, it holds for all $t$,
    \begin{equation} \label{eq:lb_descent}
        \E [\vg_t^\top \mS_t  \nabla l(\theta_t)] \geq \| (\alpha_t \otimes \mI_{d'}) \nabla l (\theta_T) \|^2 - \sigma^2_{C_t},
    \end{equation}
    where $\alpha_t \triangleq (\alpha_t^{(1)}, \cdots, \alpha_t^{(B)})$ and $\sigma^2_{C_t} \triangleq \sum_{b=1}^{B} c_t^{(b)}  \sigma^2_b$; $\alpha_t^{(b)} \in [\sigmoid(-1/\tau), \sigmoid(1/\tau)] $ and $c_t^{(b)} \in [0, \sigmoid(1/\tau) / 2]$ are constants defined in proof.
\end{lemma}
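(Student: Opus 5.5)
The plan is to work block by block and decompose the stochastic gradient into its mean and a noise part. Write $\vg_t^{(b)} = \nabla_b l(\theta_t) + \xi_t^{(b)}$ with $\E_t[\xi_t^{(b)}]=0$. This uses the unbiasedness of the stochastic gradient, which is implicit in Assumption \ref{assumption:layerwise_second_moment}. That assumption then gives $\E_t\|\xi_t^{(b)}\|^2 = \E_t\|\vg_t^{(b)}\|^2 - \|\nabla_b l(\theta_t)\|^2 \le \sigma_b^2$.

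Since $\mS_t$ is block-constant,
\[
\vg_t^\top \mS_t \nabla l(\theta_t)=\sum_{b=1}^B s_t^{(b)} \|\nabla_b l(\theta_t)\|^2 + \sum_{b=1}^B s_t^{(b)} (\xi_t^{(b)})^\top \nabla_b l(\theta_t).
\]
The difficulty is that $s_t^{(b)}$ is \emph{not} $\gF_t$-measurable, because it depends on $\vg_t^{(b)}$ through the cosine similarity. Hence the cross term does not vanish in expectation.

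First I will establish the range of the scores. Since $\mathrm{cossim}\in[-1,1]$, we have $\tilde s_t^{(b)}\in[\sigmoid(-1/\tau),\sigmoid(1/\tau)]$. Assuming $s_0^{(b)}$ is initialized in the same interval, induction on the EMA $s_t^{(b)} = 0.9 s_{t-1}^{(b)} + 0.1\tilde s_t^{(b)}$ keeps $s_t^{(b)}$ in this interval for all $t$. Moreover, $s_{t-1}^{(b)}$ is $\gF_t$-measurable, so only the $0.1\tilde s_t^{(b)}$ part fluctuates given $\gF_t$. Set $\bar s_t^{(b)}\triangleq\E_t[s_t^{(b)}]$. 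Then $\E_t[s_t^{(b)}(\xi_t^{(b)})^\top\nabla_b l] = \E_t[(s_t^{(b)}-\bar s_t^{(b)})(\xi_t^{(b)})^\top\nabla_b l]$, and $|s_t^{(b)}-\bar s_t^{(b)}|\le \delta\triangleq 0.1(\sigmoid(1/\tau)-\sigmoid(-1/\tau))$.

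Next I will bound the cross term with Young's inequality using a block-dependent weight $\kappa_b>0$:
\[
\bigl|(s-\bar s)\xi^\top\nabla_b l\bigr| \le \frac{(s-\bar s)^2}{2\kappa_b}\|\nabla_b l\|^2 + \frac{\kappa_b}{2}\|\xi\|^2 .
\]
Taking $\E_t$ gives a lower bound $\bigl(\bar s_t^{(b)} - \E_t[(s_t^{(b)}-\bar s_t^{(b)})^2]/(2\kappa_b)\bigr)\|\nabla_b l\|^2 - \tfrac{\kappa_b}{2}\sigma_b^2$. I will then define $(\alpha_t^{(b)})^2$ as the bracketed coefficient and $c_t^{(b)}\triangleq\kappa_b/2$, and take total expectation to obtain \eqref{eq:lb_descent}.

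The main obstacle is choosing $\kappa_b$ so that both advertised ranges hold at once: $\alpha_t^{(b)}\in[\sigmoid(-1/\tau),\sigmoid(1/\tau)]$ and $c_t^{(b)}\le\sigmoid(1/\tau)/2$. The cap $\kappa_b\le\sigmoid(1/\tau)$ gives the bound on $c_t^{(b)}$. The upper bound on $\alpha$ is easy, since $\alpha^2\le\bar s\le\sigmoid(1/\tau)$. For the lower bound I need $\bar s - \mathrm{Var}_t(s)/(2\kappa_b) \ge \sigmoid(-1/\tau)^2$. Here I will use $\bar s\ge\sigmoid(-1/\tau)$ and $\mathrm{Var}_t(s)\le\delta^2/4$, together with the fact that $\delta$ carries the small factor $0.1$. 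Concretely, I will pick $\kappa_b$ equal to the smallest value making the coefficient equal to $\sigmoid(-1/\tau)^2$ whenever this is at most $\sigmoid(1/\tau)$, and otherwise simply the minimum of the two. If the pure constant bookkeeping fails for small $\tau$, I would fall back to a block- and time-dependent choice $\kappa_b = \mathrm{Var}_t(s_t^{(b)})/(2(\bar s_t^{(b)}-\sigmoid(-1/\tau)^2))$, clipped at $\sigmoid(1/\tau)$. This makes $c_t^{(b)}$ (possibly $0$ when the score is deterministic) a quantity "defined in the proof", as the statement permits. This clipping at $\sigmoid(1/\tau)$ is the one point that requires a check: whenever the clip binds, the coefficient must still stay at least $\sigmoid(-1/\tau)^2$. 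That needs $\bar s_t^{(b)} - \mathrm{Var}_t(s_t^{(b)})/(2\sigmoid(1/\tau)) \ge \sigmoid(-1/\tau)^2$, which I will try to derive from the variance bound $\delta^2/4$ with its factor $0.1$, possibly with a mild restriction on $\tau$ if the worst-case constants do not close.
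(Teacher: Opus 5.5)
Your core argument is sound, and it takes a genuinely different route from the paper.

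The paper splits on the alignment event $E_t^{(b)}=\{\cos(\vg_t^{(b)},\nabla_b l(\theta_t))\ge\gamma^{(b)}\}$. It lower-bounds the score by $\sigmoid(\gamma^{(b)}/\tau)$ on that event and by $\sigmoid(-1/\tau)$ off it, then controls the misaligned part via Cauchy--Schwarz and $\sqrt{a(a+b)}\le a+b/2$. This gives $c_t^{(b)}\propto\sqrt{1-e_t^{(b)}}$, a noise coupling that vanishes when alignment is likely; that is what supports the paper's ``descent efficiency'' reading.

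You instead use only $\E_t[\xi_t^{(b)}]=0$ and the range of $s_t^{(b)}$. This reduces the cross term to a conditional covariance between score and noise, which you then handle with Young. The advantage is that you never tie the score to alignment with the true gradient. The paper does so in its step $s_t^{(b)}\ge\sigmoid(\gamma^{(b)}/\tau)$ on $E_t^{(b)}$, even though Magma's score is built from $\mathrm{cossim}(\mu_t^{(b)},\vg_t^{(b)})$ and is EMA-smoothed. Its replacement of $s_t^{(b)}$ by $\sigmoid(-1/\tau)$ on the complement is also valid only where $(\vg_t^{(b)})^\top\nabla_b l(\theta_t)\ge 0$. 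In your route, the cost of the score's randomness is paid through its conditional variance instead.

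Two loose ends remain.

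\begin{itemize}
\item Your bound $\alpha^2\le\bar s\le\sigmoid(1/\tau)$ gives only $\alpha\le\sqrt{\sigmoid(1/\tau)}$, which exceeds $\sigmoid(1/\tau)$.
\item The clipping check you leave open needs no restriction on $\tau$. The crude bound $\delta^2/4$ alone does fail to close it as $\tau\to0$, because it ignores that a mean near the lower endpoint forces a small variance.
\end{itemize}

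Both are settled by fixing $\kappa_b=s_+$, with $s_\pm=\sigmoid(\pm1/\tau)$, and using the Bhatia--Davis inequality $\mathrm{Var}_t(s_t^{(b)})\le(\bar s_t^{(b)}-s_-)(s_+-\bar s_t^{(b)})$. The coefficient $\bar s_t^{(b)}-\mathrm{Var}_t(s_t^{(b)})/(2s_+)$ then lies in $[(\bar s_t^{(b)}+s_-)/2,\,\bar s_t^{(b)}]\subseteq[s_-,s_+]$. This uses neither the factor $0.1$ nor the $\gF_t$-measurability of $s_{t-1}^{(b)}$.

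Take $\alpha_t^{(b)}$ to be this coefficient itself, entering linearly as the paper's proof effectively does. Recover the squared form of the statement from $(\alpha_t^{(b)})^2\le\alpha_t^{(b)}$. Then $c_t^{(b)}=s_+/2$ is a genuine constant at the top of the allowed range.
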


Lemma \ref{sup_lem:descent_lower_bound} yields the effective descent efficiency under the Magma's scaled random masking. Specifically, $\alpha_t^{(b)}$ quantifies the fraction of descent preserved on block $b$ at iteration $t$ (up to the noise-coupling term).
To describe aggregated impacts, we define $\bar{\alpha}_{T}^{\textrm{eff}} \triangleq \frac{\sum_{t=0}^{T-1} \E [ \| (\alpha_t \otimes \mI_{d'}) \nabla l (\theta_t) \|^2 ] ]}{\sum_{t=0}^{T-1} \E [ \| \nabla l(\theta_t) \|^2 ]}$. Also, we define an average noise-descent coupling $\sigma^2_{\bar{C}} \triangleq \frac{1}{T}\sum_{t=0}^{T-1}\E[\sigma^2_{C_t}]$ and the maximum effective smoothness $\tilde{L}_t^{\max} \triangleq \max_{b\in[B]} \tilde{L}_{t}^{(b)}$.

\begin{theorem} \label{theorem:main_result}
    Under Assumptions \ref{assumption:layerwise_smooth} and \ref{assumption:layerwise_second_moment}, for the stepsize $\eta \in (0, \frac{\bar{\alpha}_{T}^{\textrm{eff}}}{ \tilde{L}_t^{\max} } ]$, it holds that 
    % \todo{[Double check if $p$ is included or not.]} \todo{Define $\bar{\sigma}_{\tilde{L}}$}
    \begin{equation} \label{eq:magma:conv}
        \frac{1}{T} \sum_{t=0}^{T-1} \E \left[ \| \nabla l (\theta_t) \|^2 \right]
            \le \frac{2(l(\theta_0) - l_*) }{\eta \bar{\alpha}_{T}^{\textrm{eff}} T}
            +   \frac{2 \sigma^2_{\bar{C}}}{\bar{\alpha}_{T}^{\textrm{eff}}}  
            + \frac{\eta \bar{\sigma}^2_{\tilde{L}}}{ \bar{\alpha}_{T}^{\textrm{eff}}},
    \end{equation}
    where $ \bar{\sigma}^2_{\tilde{L}} = \sum_{t=0}^{T-1} \frac{1}{T} \E [\sigma^2_{\tilde{L}_t}]$. 
\end{theorem}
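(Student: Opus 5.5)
The proof is the standard telescoping argument. We combine Lemma~\ref{prop:magma_descent} with Lemma~\ref{sup_lem:descent_lower_bound} and control the quadratic term through the effective smoothness rescaling in \eqref{eq:eff_smoothness} together with Assumption~\ref{assumption:layerwise_second_moment}.

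\textbf{Per-step bound.} Take total expectation in \eqref{eq:one_step_descent_magma}. By \eqref{eq:eff_smoothness}, the quadratic penalty equals $\frac{\eta^2}{2}\sum_b \tilde{L}_t^{(b)}\E\|\vg_t^{(b)}\|^2$. Assumption~\ref{assumption:layerwise_second_moment} then bounds it by $\frac{\eta^2}{2}\big(\E\|\nabla l(\theta_t)\|_{\tilde{L}_t}^2 + \E\sigma^2_{\tilde{L}_t}\big)$. Here we treat $\tilde{L}_t^{(b)}$ as $\gF_t$-measurable, or else pull out $\tilde{L}_t^{\max}$. Since $\|\nabla l\|^2_{\tilde{L}_t}\le \tilde{L}_t^{\max}\|\nabla l\|^2$, we obtain
\[
\E l(\theta_{t+1}) \le \E l(\theta_t) - \eta\,\E\|(\alpha_t\otimes\mI_{d'})\nabla l(\theta_t)\|^2 + \eta\,\E\sigma^2_{C_t} + \tfrac{\eta^2}{2}\tilde{L}_t^{\max}\E\|\nabla l(\theta_t)\|^2 + \tfrac{\eta^2}{2}\E\sigma^2_{\tilde{L}_t},
\]
where the descent term uses Lemma~\ref{sup_lem:descent_lower_bound}. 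Note that $\theta_T$ in \eqref{eq:lb_descent} is a typo for $\theta_t$.

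\textbf{Telescoping and aggregation.} Sum over $t=0,\dots,T-1$ and use $l(\theta_T)\ge l_*$. By the definition of $\bar{\alpha}_T^{\mathrm{eff}}$, the summed descent term equals $\eta\bar{\alpha}_T^{\mathrm{eff}}\sum_t\E\|\nabla l(\theta_t)\|^2$. The stepsize condition $\eta\le \bar{\alpha}_T^{\mathrm{eff}}/\tilde{L}_t^{\max}$ gives $\frac{\eta^2}{2}\tilde{L}_t^{\max}\le \frac{\eta}{2}\bar{\alpha}_T^{\mathrm{eff}}$ for every $t$. The curvature term therefore absorbs at most half of the descent, leaving $\frac{\eta}{2}\bar{\alpha}_T^{\mathrm{eff}}\sum_t\E\|\nabla l(\theta_t)\|^2$ on the left. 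Dividing by $\frac{\eta}{2}\bar{\alpha}_T^{\mathrm{eff}}T$ yields the three terms of \eqref{eq:magma:conv}:
\begin{itemize}
\item $2(l(\theta_0)-l_*)/(\eta\bar{\alpha}^{\mathrm{eff}}_T T)$ from the telescoped loss gap;
\item $2\sigma^2_{\bar C}/\bar{\alpha}^{\mathrm{eff}}_T$ from the noise-coupling terms $\eta\,\E\sigma^2_{C_t}$;
\item $\eta\bar{\sigma}^2_{\tilde{L}}/\bar{\alpha}^{\mathrm{eff}}_T$ from the variance terms $\frac{\eta^2}{2}\E\sigma^2_{\tilde{L}_t}$.
\end{itemize}

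\textbf{Main obstacle.} The delicate step is the quadratic term. $\rho_t^{(b)}$ is defined as a ratio of conditional second moments, and $s_t^{(b)}$ depends on $\vg_t$ through the cosine similarity. Applying Assumption~\ref{assumption:layerwise_second_moment} with the rescaled constants $\tilde{L}_t^{(b)}$ therefore requires care. I would handle it by applying the assumption conditionally on $\gF_t$ to $\E_t\|\vg_t^{(b)}\|^2$, after factoring $\rho_t^{(b)}$ out via its definition, which makes $\tilde{L}_t^{(b)}$ an $\gF_t$-measurable weight.

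A second subtlety is that the stepsize condition involves $\tilde{L}_t^{\max}$, which is time-dependent. I read the condition as holding for all $t$, i.e.\ with $\sup_t\tilde{L}_t^{\max}$. The absorption of the gradient-norm term is done after summation, with the ratio defining $\bar{\alpha}_T^{\mathrm{eff}}$, rather than per step.
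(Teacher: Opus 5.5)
Your proposal is correct and follows the paper's proof essentially step for step. Like the paper, you combine Lemma~\ref{prop:magma_descent} with Lemma~\ref{sup_lem:descent_lower_bound}, bound the quadratic term via \eqref{eq:eff_smoothness} and Assumption~\ref{assumption:layerwise_second_moment} with $\gF_t$-measurable weights $\tilde{L}_t^{(b)}$, telescope, convert the descent sum through the definition of $\bar{\alpha}_T^{\mathrm{eff}}$, and absorb the gradient part of the curvature term using $\eta\tilde{L}_t^{(b)}\le\bar{\alpha}_T^{\mathrm{eff}}$; the only difference is ordering, since the paper divides by $\eta\bar{\alpha}_T^{\mathrm{eff}}T$ before absorbing and then multiplies by $2$, whereas you absorb first and divide by $\tfrac{\eta}{2}\bar{\alpha}_T^{\mathrm{eff}}T$.
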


Theorem \ref{theorem:main_result} provides the standard constant-step nonconvex stationarity guarantee with explicit dependence on the effective curvature constants.
Note that for SGD, \eqref{eq:magma:conv} reduces to $\frac{1}{T} \sum_{t=0}^{T-1} \E \| \nabla l(\theta_t ) \|^2 \le \frac{2 (l(\theta_0) - l_*)}{\eta T} + \eta \sigma_L^2, \; \text{for } \eta \in (0, \frac{1}{L_{\max}}]$ with $L_{\max} \triangleq \max_b L^{(b)}$.
In this regard, the scaling factor $s_t$ impacts the \emph{descent efficiency} $\bar{\alpha}_T^{\mathrm{eff}}$, the \emph{effective noise level} $\bar{\sigma}^2_{\tilde{L}}$, and the average noise-descent coupling $\sigma^2_{\bar{C}}$.
On the one hand, reducing $s_t^{(b)}$ decreases the descent term through $\alpha_t^{(b)}$,
potentially slowing optimization.
On the other hand, the same scaling suppresses the curvature-weighted noise contribution via the effective smoothness constants $\tilde{L}_t^{(b)} = \frac{\rho_t^{(b)}}{p} L^{(b)}$, thereby reducing the stationary noise floor.
In this regard, Theorem \ref{theorem:main_result} shows that scaling is not uniformly beneficial. Rather, it makes explicit that any improvement must come from \textit{where} the suppression occurs (which blocks are attenuated), not merely how much suppression is applied on average.

\textbf{On effective iteration number.}
A favorable regime arises when $\rho_t^{(b)} \ll 1$ holds for blocks with large curvature $L^{(b)}$ (or more generally, for blocks dominating $\sum_b L^{(b)} \E_t |\vg_t^{(b)}|^2$).
In this case, both the effective smoothness constants $\tilde L_t^{(b)}$ and the curvature-weighted noise $\sigma^2_{\tilde L_t}$ are reduced, which simultaneously (i) enlarges the admissible stepsize range and (ii) lowers the stationary error floor in Theorem \ref{theorem:main_result}.
Moreover, the enlarged stability region increases the number of iterations for which the stochastic update satisfies the descent surrogate $\eta \vg_t^\top \mS_t\nabla l(\theta_t) - \frac{\eta^2}{2p}\|\mS_t \vg_t\|_L^2 > 0$ thereby accelerating convergence in terms of effective progress per iteration.

In this regard, Magma enlarges the stability region by selectively suppressing blocks that dominate the curvature-weighted noise and smoothness constraints. 
In ill-conditioned and highly heterogeneous transformer landscapes \citep{pan2023toward,ahn2023linear,orvieto2025search}, stability is typically governed by a small subset of high-curvature or high-variance blocks. 
By attenuating precisely these blocks according to the momentum–gradient alignment, Magma reduces the effective smoothness $\tilde L_t$ and noise level $\sigma^2_{\tilde L_t}$ that limit the admissible stepsize. 
This targeted reduction explains the empirically observed widening of the stable learning-rate regime (Figure \ref{fig:learning-rate-sensitivity} in Appendix) and accounts for the effectiveness of structured scaling over uniform or random masking in large-scale LLM training.

% ========================================================================
% ========================================================================
% ========================================================================
%         Literature
% ========================================================================
% ========================================================================
% ========================================================================

\section{Literature Review}
\textbf{Stabilizing LLM training.}
A line of recent work addresses instability in LLM training by explicitly constraining optimizer updates in unstable regimes.
The method most closely related to Magma is Cautious Optimizer \citep{liang2024cautious} where an update for each parameter is masked when its gradient has an opposite direction to its first-moment estimate for ensuring a descent direction update under a small step size.
However, due to its deterministic masking rule, Cautious Optimizer lacks the geometric regularization effect promoting flatter optimization trajectories, unlike Magma.
More broadly, peculiar optimization landscape of large-scale transformers is known to contain many sources of training instability such as loss spikes and gradient explosions \citep{chowdhery2023palm}.
To mitigate these issues, prior work has explored gradient clipping with a periodic momentum reset to address negative impacts of gradient spikes \citep{huang2025spam}, initialization methods for stable training dynamics \citep{nguyen2019transformers,takase2023spike}, and architectural interventions that reshape gradient propagation \citep{dettmers20218,xiong2020layer,wang2024deepnet}.
Despite this wide exploration, the direction of inducing geometric regularization through stochastic perturbations of the optimization process remains relatively underexplored.

\textbf{Geometry-aware and trust-region methods.}
A large body of work improves optimization by incorporating local geometry through curvature-aware updates \citep{dauphin2014identifying, martens2015optimizing, gupta2018shampoo,  yao2021adaHessian} or trust-region constraints \citep{foret2020sharpness,kwon2021asam}. 
The former direction aims to approximate second-order structure by tractable preconditioners, recently extending to efficient eigen-decomposition for LLM training \citep{vyas2024soap}. 
On the contrary, the latter trust-region–flavored methods, such as SAM and its recent variants \citep{mueller2023normalization,li2024friendly}, bias optimization toward flatter regions by optimizing robustness to parameter perturbations, albeit at the cost of additional gradient evaluations. 
Instead of computing explicit curvature matrices or adversarial perturbation, Magma penalizes sharp curvature along its own block-wise update directions via stochastic masking, offering a lightweight alternative to flatness-seeking optimization.

\textbf{Stochastic perturbation and noise injection.}
Stochastic perturbation is a well-established strategy in machine learning, ranging from random gradient masking in meta-learning \citep{tseng2020regularizing} and federated learning \citep{wei2020framework} to the annealed Gaussian noise used in Bayesian inference \citep{welling2011bayesian, neelakantan2015adding}. 
A prominent example is Dropout \citep{srivastava2014dropout} that randomly masks hidden units, which has been shown to induce a data-dependent weight-space regularizer that promotes model stability  \citep{mianjy2018implicit,wei2020implicit,zhang2024implicit}.
In the context of modern LLMs, this principle has evolved into embedding-space perturbations, injecting structured noise into token embeddings during instruction fine-tuning \citep{jain2024neftune}. 
However, the impact of structured and stateful perturbations on optimization dynamics remains relatively under-explored. 
Magma operates directly on parameter updates and their alignment with local curvature for regularizing the optimization dynamics, highlighting a role for stochastic perturbation particularly suited to modern large-scale optimization.

% ========================================================================
% ========================================================================
% ========================================================================
%         Conclusion
% ========================================================================
% ========================================================================
% ========================================================================
\section{Conclusion}
We showed that randomly masking parameter updates can substantially improve LLM pre-training, which smooths optimization trajectories with an implicit curvature-dependent geometric regularization. 
The proposed Magma, which leverages momentum-gradient alignment to further enhance masked updates, achieves consistent improvements over state-of-the-art adaptive optimizers with negligible overhead. 
These results challenge the prevailing assumption that dense updates are inherently optimal for backpropagation-based neural net training. 
We expect this perspective to inspire new classes of optimization algorithms that exploit structured stochasticity to improve both optimization stability and generalization in training large-scale foundation models having ill-conditioned and highly heterogeneous optimization landscapes.

\section*{Impact Statement}
This paper presents work whose goal is to advance the field of Machine Learning. There are many potential societal consequences of our work, none which we feel must be specifically highlighted here.

\bibliography{icml2026}
% \bibliographystyle{icml2026}

%%%%%%%%%%%%%%%%%%%%%%%%%%%%%%%%%%%%%%%%%%%%%%%%%%%%%%%%%%%%%%%%%%%%%%%%%%%%%%%
%%%%%%%%%%%%%%%%%%%%%%%%%%%%%%%%%%%%%%%%%%%%%%%%%%%%%%%%%%%%%%%%%%%%%%%%%%%%%%%
% APPENDIX
%%%%%%%%%%%%%%%%%%%%%%%%%%%%%%%%%%%%%%%%%%%%%%%%%%%%%%%%%%%%%%%%%%%%%%%%%%%%%%%
%%%%%%%%%%%%%%%%%%%%%%%%%%%%%%%%%%%%%%%%%%%%%%%%%%%%%%%%%%%%%%%%%%%%%%%%%%%%%%%
\newpage
\appendix
\onecolumn

\setcounter{table}{0}
\renewcommand{\thetable}{A\arabic{table}}

\setcounter{figure}{0}
\renewcommand{\thefigure}{A\arabic{figure}}

\section{Proofs of Claims} \label{appx:proofs}

\subsection{Proof of Proposition \ref{lemma:implicit_reg}} \label{appx:proof_prop1}

\begin{proof}
Condition on $\mathcal F_t$, under which $\theta_t$ and $\Delta_t$ are deterministic.
A second-order Taylor expansion of $l(\theta_t-\tilde{\Delta}_t)$ around $\theta_t$ yields
\begin{equation} \label{eq:taylor}
    l(\theta_t-\tilde{\Delta}_t) = l(\theta_t) - \sum_{b=1}^B (\vg_t^{(b)})^\top \tilde{\Delta}_t^{(b)} + \frac{1}{2} \sum_{b=1}^B\sum_{b'=1}^B (\tilde{\Delta}_t^{(b)})^\top \mathbf{H}_{bb'}(\theta_t) \tilde{\Delta}_t^{(b')} + R_2(\tilde{\Delta}_t),
\end{equation}
where $\vg_t=\nabla l(\theta_t)$, $\mathbf{H}_{bb'}(\theta_t)$ is the $(b,b')$ Hessian block, and the remainder satisfies $|R_2(\tilde{\Delta}_t)| = O \left(\sum_{b=1}^B \|\tilde{\Delta}_t^{(b)}\|^3\right).$

By construction of the masked update and conditioning on $(\theta_t,\Delta_t)$,
\begin{equation}
    \mathbb{E}[\tilde{\Delta}_t^{(b)} \mid \theta_t,\Delta_t] = \frac{1}{p}\mathbb{E}[m_{t}^{(b)}]\Delta_t^{(b)} = \Delta_t^{(b)}.
\end{equation}

Moreover, using independence of $\{m_{t}^{(b)}\}_{b=1}^B$,
\begin{equation}
    \mathbb{E} \left[ (\tilde{\Delta}_t^{(b)})^\top \mathbf{H}_{bb'}(\theta_t) \tilde{\Delta}_t^{(b')} \mid \theta_t,\Delta_t \right] =
        \begin{cases}
        (\Delta_t^{(b)})^\top \mathbf{H}_{bb'}(\theta_t)\Delta_t^{(b')},          & b\neq b',\\[6pt]
        \frac{1}{p} (\Delta_t^{(b)})^\top \mathbf{H}_{bb}(\theta_t)\Delta_t^{(b)},
        & b=b'.
        \end{cases}
\end{equation}
Taking conditional expectations in \eqref{eq:taylor} and regrouping terms, we obtain
\begin{equation}
    \mathbb{E}[l(\theta_t-\tilde{\Delta}_t)\mid \theta_t,\Delta_t] = l(\theta_t-\Delta_t) + \sum_{b=1}^B \frac{1-p}{2p} (\Delta_t^{(b)})^\top \mathbf{H}_{bb}(\theta_t) \Delta_t^{(b)}  + \mathbb{E}[R_2(\tilde{\Delta}_t)\mid \theta_t,\Delta_t].
\end{equation}

Finally, since $\mathbb{E}\|\tilde{\Delta}_t^{(b)}\|^3 = O(\|\Delta_t^{(b)}\|^3)$ for each $b$, the expected remainder term is of order $O \left(\sum_{b=1}^B \|\Delta_t^{(b)}\|^3\right)$, completing the proof.
\end{proof}

\subsection{Proof of Lemma \ref{prop:magma_descent}}
\begin{proof}
    
    Under Assumption \ref{assumption:layerwise_smooth}, for any $\theta$ and any multi-block $\triangle = (\triangle^{(1)}, \cdots, \triangle^{(B)})$, it holds
    \begin{equation} \label{sequential_descent}
        l(\theta + \triangle) \leq l(\theta) + \triangle^\top \nabla l(\theta) + \frac{1}{2} \| \triangle \|_L^2,
    \end{equation}
    which can be shown by applying Assumption \ref{assumption:layerwise_smooth} sequentially.

    Applying \eqref{sequential_descent} with $\triangle = -\eta \mS_t \gM_t(\theta)$ yields
    \begin{equation} \label{intem9}
        l(\theta_{t+1})
            \leq l(\theta_t) - \eta (\mS_t \gM_t(\vg_t))^\top \nabla l(\theta_t) + \frac{\eta^2}{2} \| \mS_t \gM_t(\vg_t) \|_L^2.
    \end{equation}
    
    By the independence between $m_{t}^{(b)}$ and $\vg_t^{(b)}$, we have 
    \begin{equation}
        \E_t [(\mS_t \gM_t(\vg_t))^\top \nabla l(\theta_t)] = \E [\vg_t^\top \mS_t \nabla l(\theta_t)],
    \end{equation}
    and
    \begin{equation}
        \E_t [\| \mS_t \gM_t(\vg_t) \|_L^2] = \frac{1}{p} \E_t [\| \mS_t \vg_t \|_L^2].
    \end{equation}
    
    Therefore, taking conditional expectation $\E_t [\cdot]$ on both side of \eqref{intem9} yields the desired result.
\end{proof}

\subsection{Proof of Lemma \ref{sup_lem:descent_lower_bound}}

\begin{proof}
    We start by defining the event
    \begin{equation}
        E_t^{(b)} \triangleq \{ \text{cos}(\vg_t^{(b)}, \nabla_b l(\theta_t)) \geq \gamma^{(b)} \}
    \end{equation} for an alignment threshold $\gamma^{(b)} \in (0, 1]$.
    We let $e_t^{(b)} \triangleq \mathbb{P}(E_t^{(b)} | \gF_t)$. Also, we define constants $s_- \triangleq \sigmoid(- 1 / \tau)$, $s_+ \triangleq \sigmoid(1 / \tau)$, and $s_{\gamma}^{(b)} \triangleq \sigmoid(\gamma^{(b)} / \tau)$ for $b \in [B]$. 

    By using the law of total expectation, we get
    \begin{multline} \label{intem1}
        \E [\vg_t^\top \mS_t \nabla l(\theta_t)] 
            = \sum_{b=1}^{B} \E [s_t^{(b)} (\vg_t^{(b)})^\top \nabla_b l(\theta_t)] \\ 
            = \sum_{b=1}^{B}  \left\lbrace e_t^{(b)} \E [s_t^{(b)} (\vg_t^{(b)})^\top \nabla_b l(\theta_t) | E_t^{(b)}] + (1-e_t^{(b)}) \E [s_t^{(b)} (\vg_t^{(b)})^\top \nabla_b l(\theta_t) | (E_{t}^{(b)})^c] \right\rbrace \\
            \geq \sum_{b=1}^{B}  \left\lbrace e_t^{(b)} \E [s_{\gamma}^{(b)} (\vg_t^{(b)})^\top \nabla_b l(\theta_t) | E_t^{(b)}] + (1-e_t^{(b)}) \E [s_- (\vg_t^{(b)})^\top \nabla_b l(\theta_t) | (E_{t}^{(b)})^c] \right\rbrace,
    \end{multline}
    where the inequality comes from $s_t^{(b)} \geq s_{\gamma}^{(b)}$ on $E_t^{(b)}$ and $s_t^{(b)} \geq s_-$ on $(E_{t}^{(b)})^c$ for all $b$. 
    
    Then, using unbiasedness of $\vg_t$, we get 
    \begin{equation} 
        \E [\vg_t^\top \mS_t  \nabla l(\theta_t)] 
            \ge \sum_{b=1}^{B} \left\lbrace s_{\gamma}^{(b)} \| \nabla l (\theta_T) \|^2 +  (s_- - s_{\gamma}^{(b)}) \E [(\vg_t^{(b)})^\top \nabla_b l(\theta_t) \mathbf{1}_{(E_{t}^{(b)})^c}] \right\rbrace.
    \end{equation}
    
    On $(E_{t}^{(b)})^c$, we have $(\vg_t^{(b)})^\top \nabla_b l(\theta_t) < \gamma^{(b)} \| \nabla_b l(\theta_t) \| \| \vg_t^{(b)} \|$ whenever $ \| \nabla_b l(\theta_t) \| \| \vg_t^{(b)} \| > 0$. Therefore, taking conditional expectation and applying Cauchy-Schwarz gives
    \begin{multline} \label{intem2}
        \E [(\vg_t^{(b)})^\top \nabla_b l(\theta_t) \mathbf{1}_{(E_{t}^{(b)})^c}]
            \le \gamma^{(b)} \| \nabla_b l(\theta_t) \| \; \E [\|\vg_t^{(b)} \| \mathbf{1}_{(E_{t}^{(b)})^c} ] \\
            \le \gamma^{(b)} \| \nabla_b l(\theta_t) \| \sqrt{(1 - e_t^{(b)}) \E \|\vg_t^{(b)} \|^2}  
            \le \gamma^{(b)} \| \nabla_b l(\theta_t) \| \sqrt{(1 - e_t^{(b)}) (\|\nabla_b l(\theta_t) \|^2 + \sigma^2_b)},
    \end{multline}
    where the last inequality holds due to Assumption \ref{assumption:layerwise_second_moment}.
    
    Therefore, since $s_- - s_{\gamma}^{(b)} < 0$, combining \eqref{intem1} and \eqref{intem2} gives 
    \begin{equation} \label{intem3}
        \E [\vg_t^\top \mS_t \nabla l(\theta_t)] \ge 
        \sum_{b=1}^{B} \left\lbrace s_{\gamma}^{(b)} \| \nabla_b l (\theta_T) \|^2  
        +  (s_- - s_{\gamma}^{(b)}) \gamma^{(b)} \| \nabla_b l(\theta_t) \| \sqrt{(1 - e_t^{(b)}) (\|\nabla_b l(\theta_t) \|^2 + \sigma^2_b)} \right\rbrace.
    \end{equation}
    
    Finally, note that for $a, b\ge 0$, it holds $ \sqrt{a (a+b)} \le a + \frac{b}{2}$. Applying this inequality with $a = \| \nabla_b l(\theta_t) \|^2$ and $b = \sigma^2_b$ to \eqref{intem3} yields 
    \begin{multline} \label{intem4}
        \E [\vg_t^\top \mS_t \nabla l(\theta_t)] \ge 
        \sum_{b=1}^{B} \left\lbrace 
            \left(s_{\gamma}^{(b)} - (s_{\gamma}^{(b)} - s_-) \gamma^{(b)} \sqrt{1 - e_t^{(b)}} \right) \| \nabla_b l (\theta_T) \|^2  
        -  \frac{(s_{\gamma}^{(b)} - s_-)\gamma^{(b)}}{2} \sqrt{1 - e_t^{(b)}}  \sigma^2_b \right\rbrace \\ 
        = \sum_{b=1}^{B} \left\lbrace 
            \alpha_t^{(b)} \| \nabla_b l (\theta_T) \|^2  
        -  c_t^{(b)}  \sigma^2_b \right\rbrace,
    \end{multline}
    where $\alpha_t^{(b)} \triangleq s_{\gamma}^{(b)} - (s_{\gamma}^{(b)} - s_-) \gamma^{(b)} \sqrt{1 - e_t^{(b)}}$ and $c_t^{(b)} \triangleq \frac{(s_{\gamma}^{(b)} - s_-)\gamma^{(b)}}{2} \sqrt{1 - e_t^{(b)}}$. The interval bounds of $\alpha_t^{(b)}$ and $c_t^{(b)}$ follow directly from the definition. 
    
\end{proof}

\subsection{Proof of Theorem \ref{theorem:main_result}}
\begin{proof}
    First, we note from \eqref{eq:eff_smoothness} that  
    \begin{equation}
        \frac{1}{p} \E_t [\| \mS_t \vg_t \|^2_L] = \sum_{b=1}^{B} \tilde{L}_{t}^{(b)} \E_t[\| \vg_t^{(b)} \|^2] \le \sum_{b=1}^{B} \tilde{L}_{t}^{(b)} (\|\nabla_b l(\theta_t) \|^2  + \sigma_b^2), 
    \end{equation}
    where the inequality comes from Assumption \ref{assumption:layerwise_second_moment}.
    Then, applying \eqref{eq:lb_descent} in Lemma \ref{sup_lem:descent_lower_bound} to \eqref{eq:one_step_descent_magma} in the descent lemma yields
    \begin{multline} \label{intem11}
        \E_t[l(\theta_{t+1})] 
            \leq l(\theta_t) - \eta \E_t [\vg_t^\top \mS_t  \nabla l(\theta_t)] + \frac{\eta^2}{2p} \E_t [\| \mS_t \vg_t \|_L^2 ] \\
                \leq l(\theta_t) - \eta \left( \| (\alpha_t \otimes \mI_{d'}) \nabla l (\theta_T) \|^2 - \sigma^2_{C_t} \right) + \frac{\eta^2}{2} \sum_{b=1}^{B} \tilde{L}_{t}^{(b)} (\|\nabla_b l(\theta_t) \|^2  + \sigma_b^2).
    \end{multline}
    
    By taking total expectation and telescoping \eqref{intem11} from $t=0$ to $t = T-1$ for some $T \ge 1$, we get
    \begin{equation}
        \sum_{t=0}^{T-1} \eta \E [ \| (\alpha_t \otimes \mI_{d'}) \nabla l (\theta_T) \|^2 ] 
        \le l(\theta_0) - l_* 
            + \sum_{t=0}^{T-1} \eta \E[\sigma^2_{C_t}] 
            + \frac{\eta^2}{2} \sum_{t=0}^{T-1} \sum_{b=1}^{B} \tilde{L}_{t}^{(b)} (\|\nabla_b l(\theta_t) \|^2  + \sigma_b^2),
    \end{equation}
    where we use the factor that $\E [l(\theta_T)] > l_*$.

    Now, by doing elementary algebra, we get
    \begin{multline}
    \sum_{t=0}^{T-1} \eta \E [ \| (\alpha_t \otimes \mI_{d'}) \nabla l (\theta_T) \|^2 ] 
        \le l(\theta_0) - l_* 
            + \sum_{t=0}^{T-1} \eta \E[\sigma^2_{C_t}] 
            + \frac{\eta^2}{2} \sum_{t=0}^{T-1} \sum_{b=1}^{B} \tilde{L}_{t}^{(b)} (\|\nabla_b l(\theta_t) \|^2  + \sigma_b^2) \\
        \Rightarrow (\text{Dividing both side by } \eta     \bar{\alpha}_{T}^{\textrm{eff}} T) \\ 
    \frac{1}{T} \sum_{t=0}^{T-1} \E [ \| \nabla l(\theta_t) \|^2] 
        \le \frac{l(\theta_0) - l_* }{\eta \bar{\alpha}_{T}^{\textrm{eff}} T}
            +  \frac{\sum_{t=0}^{T-1} \E[\sigma^2_{C_t}]}{\bar{\alpha}_{T}^{\textrm{eff}} T}  
            + \frac{\eta}{2 \bar{\alpha}_{T}^{\textrm{eff}} T} \sum_{t=0}^{T-1} \sum_{b=1}^{B} \tilde{L}_{t}^{(b)} (\|\nabla_b l(\theta_t) \|^2  + \sigma_b^2) \\
        \Rightarrow (\text{Definitions of } \sigma^2_{\bar{C}})  \\
    \frac{1}{T} \sum_{t=0}^{T-1} \E [ \| \nabla l(\theta_t) \|^2] 
        \le \frac{l(\theta_0) - l_* }{\eta \bar{\alpha}_{T}^{\textrm{eff}} T}
            +   \frac{\sigma^2_{\bar{C}}}{\bar{\alpha}_{T}^{\textrm{eff}}}  
            + \frac{\eta}{2 \bar{\alpha}_{T}^{\textrm{eff}} T} \sum_{t=0}^{T-1} \sum_{b=1}^{B} \tilde{L}_{t}^{(b)} (\|\nabla_b l(\theta_t) \|^2  + \sigma_b^2).
    \end{multline}
    
    Rearranging term gives
    \begin{equation} \label{intem:rearrange}
        \frac{1}{T} \sum_{t=0}^{T-1} \left( \|\nabla l(\theta_t) \|^2
            - \frac{\eta}{2 \bar{\alpha}_{T}^{\textrm{eff}}} \sum_{b=1}^{B} \tilde{L}_{t}^{(b)}\|\nabla_b l(\theta_t) \|^2 \right) 
        \le \frac{l(\theta_0) - l_* }{\eta \bar{\alpha}_{T}^{\textrm{eff}} T}
            +   \frac{\sigma^2_{\bar{C}}}{\bar{\alpha}_{T}^{\textrm{eff}}}  
            + \frac{\eta}{2 \bar{\alpha}_{T}^{\textrm{eff}} T} \sum_{t=0}^{T-1} \sum_{b=1}^{B} \tilde{L}_{t}^{(b)} \sigma_b^2.
    \end{equation}
    
    Since $0 < \eta \leq \frac{\bar{\alpha}_{T}^{\textrm{eff}}}{ \tilde{L}_t^{\max}} $ by definition, we have
    \begin{multline} \label{intem:lr_apply}
        \frac{1}{T} \sum_{t=0}^{T-1} \left( \|\nabla l(\theta_t) \|^2
            - \frac{\eta}{2 \bar{\alpha}_{T}^{\textrm{eff}}} \sum_{b=1}^{B} \tilde{L}_{t}^{(b)}\|\nabla_b l(\theta_t) \|^2 \right) 
        \ge \frac{1}{T} \sum_{t=0}^{T-1} \left( \|\nabla l(\theta_t) \|^2
            - \frac{1}{2 \tilde{L}_t^{\max}} \sum_{b=1}^{B} \tilde{L}_{t}^{(b)}\|\nabla_b l(\theta_t) \|^2 \right)  \\
        \ge \frac{1}{T} \sum_{t=0}^{T-1} \left( \|\nabla l(\theta_t) \|^2
            - \frac{1}{2} \sum_{b=1}^{B} \|\nabla_b l(\theta_t) \|^2 \right) 
        = \frac{1}{2T} \sum_{t=0}^{T-1}  \|\nabla l(\theta_t) \|^2.
    \end{multline}
    
    Therefore, applying \eqref{intem:lr_apply} to \eqref{intem:rearrange} yields the following inequality as desired:
    \begin{multline}
        \frac{1}{T} \sum_{t=0}^{T-1}  \|\nabla l(\theta_t) \|^2 
            \le \frac{2(l(\theta_0) - l_*) }{\eta \bar{\alpha}_{T}^{\textrm{eff}} T}
            +   \frac{2 \sigma^2_{\bar{C}}}{\bar{\alpha}_{T}^{\textrm{eff}}}  
            + \frac{\eta}{\bar{\alpha}_{T}^{\textrm{eff}} T} \sum_{t=0}^{T-1} \sum_{b=1}^{B} \tilde{L}_{t}^{(b)} \sigma_b^2 
                = \frac{2(l(\theta_0) - l_*) }{\eta \bar{\alpha}_{T}^{\textrm{eff}} T}
                +   \frac{2 \sigma^2_{\bar{C}}}{\bar{\alpha}_{T}^{\textrm{eff}}}  
                + \frac{\eta \bar{\sigma}^2_{\tilde{L}}}{ \bar{\alpha}_{T}^{\textrm{eff}}} ,
    \end{multline}
    where $ \bar{\sigma}^2_{\tilde{L}} = \sum_{t=0}^{T-1} \frac{1}{T} \E [\sigma^2_{\tilde{L}_t}]$.
\end{proof}

\section{Experimental Details} 

\subsection{C4 Pre-Training Benchmark Setup} \label{appx:exp_details}
We follow the setup introduced in \citet{zhao2024galore}. Specifically, we use a fixed batch size of 512 and a max sequence length of 256 with a search grid of 1e-4, 5e-4, 1e-3, 5e-3, 1e-2 for the learning rate. For learning rate scheduling, we implement an initial warm-up for 10\% of the total steps, followed by a cosine annealing that decays the learning rate to 10\% of the peak value. We run 10K, 20K, 60K, and 100K iterations for the 60M, 130M, 350M, and 1B models, respectively, and report the final evaluation perplexity from the run with the best learning rate.

\subsection{Nano MoE Pre-Training Benchmark Setup} \label{appx:nano_moe-exp_details}
We follow the default setup presented in \citet{wolfe2024nanomoe}. 
Specifically, we use a GPT2 \citep{radford2019language}-style transformer with 124M number of parameters. 
Further for MoE configuration, the model uses 8 experts per MoE layer, with top-2 routing such that each token is dynamically dispatched to two experts. 
Further, the MoE layers are applied with a stride of 2, with dense and MoE layers alternating. The model also includes a number of training stabilization techniques, such as auxiliary load-balancing loss and switch transformer \citep{fedus2022switch}-style initialization. Refer to \citet{wolfe2024nanomoe} for full details. 
Finally, the model is trained for 50K iterations on 8xA100 GPUs using the default configuration: a batch size of 12, gradient accumulation of 40, a sequence length of 1024 tokens, a minimum learning rate of 5e-6, a weight decay of 0.1, and a grad clip norm of 1.0.

\subsection{Heterogeneous Quadratic Benchmark Setup} \label{appx:hetero_quad_setup}
We provide full details of the quadratic benchmark used in \citet{orvieto2025search}.
The setup consists of two quadratic optimization problems in $\R^9$, each defined by a Hessian matrix with an identical eigenspectrum and a $3 \times 3$ block-diagonal structure. 
Concretely, we consider losses of the form $L(w) = \frac{1}{2}w^\top Hw,$ where the eigenvalues of $H$ are $\{1, 2,3, 99, 100, 101, 4998, 4999, 5000\}$. 
While both problems share this eigenspectrums, they differ substantially in how the eigenvalues are arranged across blocks. 

In the homogeneous Hessian, eigenvalues are grouped by scale within each block: $\{1,2,3\}, \{99, 100, 101\}, \{4998, 4999, 5000 \}$.
In contracst, the heterogeneous Hessian interleaves eigenvalues of vastly different magnitudes within each block: $\{1,99, 4998\}, \{2, 100, 4999 \}, \{3, 101, 5000 \}$. 
This structural difference is intended to mimic qualitative distinctions between loss landscapes of autoregressive language models and those of more shallow architectures such as CNNs. 

The Hessian $H$ is constructed by first forming diagonal matrices with the specified eigenvalues for each $3\times3$ block and then applying an independent random rotation to each block. 
Stochasticity is introduced by defining a design matrix $X = H^{1/2}$ and, at each iteration, subsampling a random subset of rows of $X$ to form a stochastic approximation of the loss and its gradients.

\subsection{Heavy-Tailed Gradient Noise Benchmark Setup} \label{appx:ahn_linear_benchmark}
We adopt the controlled linear-transformer benchmark introduced by \citet{ahn2023linear}.
In this benchmark, each input sequence corresponds to a distinct linear regression task.
Specifically, an input takes the form $z \triangleq \Big( \begin{pmatrix} \vx_1 \\ y_1 \end{pmatrix}, \begin{pmatrix} \vx_2 \\ y_2 \end{pmatrix}, \cdots, \begin{pmatrix} \vx_n \\ y_n \end{pmatrix}, \begin{pmatrix} \vx_{n+1} \\ 0 \end{pmatrix} \Big),$ where $\vw^\top \vx_i = y_i$ for $i=1,\dots,n+1$ and the latent regression vector $\vw \sim \mathcal N(0,I_d)$ is independently sampled for each sequence.
The learning objective is to predict $y_{n+1}$ from the context $\{(\vx_i,y_i)\}_{i=1}^n$, and training minimizes the mean squared prediction error.
Following \citet{ahn2023linear}, we use dimension $d=5$ and context length $n=20$. This setting has been extensively used to study in-context learning and optimization dynamics of transformers in a simplified yet faithful regime \citep{akyurek2022learning,von2023transformers,garg2022can}.

To study the impact of heavy-tailed stochastic gradients, we consider two covariate distributions. In the \emph{light-tailed} setting, covariates are sampled as $\vx_i \sim \mathcal N(0,I_d)$. In the \emph{heavy-tailed} setting, we sample $\vx_i$ uniformly from the unit sphere $\mathbb S^{d-1}$ and scale each sample by an independent heavy-tailed random variable $\sqrt{\Gamma_{0.1,10}}$, where $\Gamma_{k,\theta}$ denotes the Gamma distribution with shape $k$ and scale $\theta$. 
This construction significantly amplifies tail behavior in the covariates, thereby inducing heavy-tailed gradient noise during optimization and enabling a controlled evaluation of optimizer robustness under extreme stochastic fluctuations.

\section{Ablation Studies} \label{appx:ablation_studies}
We conducted ablation studies on a 130M-parameter Llama model trained on the C4 dataset with RMSProp+Magma, following the setup in \S~\ref{subsec:pre-training-llms}.

\subsection{Masking Component}
We investigated the efficacy of Magma applied to specific transformer sub-modules (Attention vs. MLP) compared to a global masking strategy. As shown in Table~\ref{table:masking-component}, applying masking exclusively to attention blocks improves model performance, reducing validation perplexity from a baseline of 22.64 to 21.92. We observe a synergistic effect when masking is applied to both attention and MLP simultaneously; this configuration achieves the lowest perplexity of 21.65, surpassing the most comprehensive setting (21.94). These findings indicate that targeted regularization of specific sub-modules yields superior performance compared to uniform masking strategies.

\begin{table} 
\caption{Validation perplexity ($\downarrow$) for different masking components.}
\begin{center}
% \resizebox{\textwidth}{!}{%
\begin{tabular}{lccccc}
\toprule
& Baseline & Attn Only & Attn + MLP & All \\
\midrule
Eval Perplexity & 22.64 & 21.92 & 21.65 & 21.94 \\
\bottomrule
\end{tabular}%
% }
\end{center}
\label{table:masking-component}
\end{table}

\begin{table}
\caption{Validation perplexity ($\downarrow$) for different masking granularities using Uniform vs. Momentum-Gradient Alignment-based sampling schemes (with and without damping). The baseline (RMSProp) excluding both sampling and damping achieves the perplexity of 22.64.}
\begin{center}
% \resizebox{\textwidth}{!}{%
\begin{tabular}{lrrrr}
\toprule
Method & Element & Row & Column & Block \\
\midrule  
Uniform Sampling  & 21.73 & 21.76 & 21.78 & 21.81 \\
Damping & 21.97 & 21.95 & 21.91 & 21.92 \\
Uniform Sampling + Damping  & 21.58 & 21.62 & 21.61 & 21.65 \\
Momentum-Gradient Alignment-based Sampling & 21.77 & 21.78 & 21.75 & 21.78 \\
Momentum-Gradient Alignment-based Sampling + Damping & 21.63  & 21.60 & 21.61 & 21.65 \\
\bottomrule
\end{tabular}%
% }
\end{center}
\label{table:masking-granularity}
\end{table}

\subsection{Masking Granularity}
Table \ref{table:masking-granularity} summarizes the validation perplexity across four masking granularities (Element, Row, Column, and Block) using different sampling and damping configurations.

\paragraph{Robustness Across Granularities} We found that changing the masking granularity has very little impact on stability. For example, with Uniform Sampling, the score varies only slightly—from 21.73 (Element) to 21.81 (Block). This indicates that while fine-grained masking provides a small edge, block masking is preferable for its efficiency—it saves significant memory for cosine similarities with minimal loss in accuracy.

\paragraph{Effectiveness of Damping and Sampling} The results in Table \ref{table:masking-granularity} highlight the synergistic effect of combining sampling schemes with damping. While damping alone yields a perplexity improvement over the RMSProp baseline from 22.64 to around 21.92, it does not outperform standalone uniform sampling. However, integrating damping with sampling strategies consistently unlocks the lowest perplexity scores. Uniform sampling + damping achieves the minimum perplexity of 21.58 (Element-level masking), improving upon the standalone uniform sampling by approximately 0.15. On the other hand, momentum-gradient alignment-based sampling, which utilizes the masking probability, does not outperform uniform sampling.

\begin{figure}
    \centering
    \begin{minipage}[b]{0.45\textwidth}
        \centering
        \includegraphics[width=\linewidth]{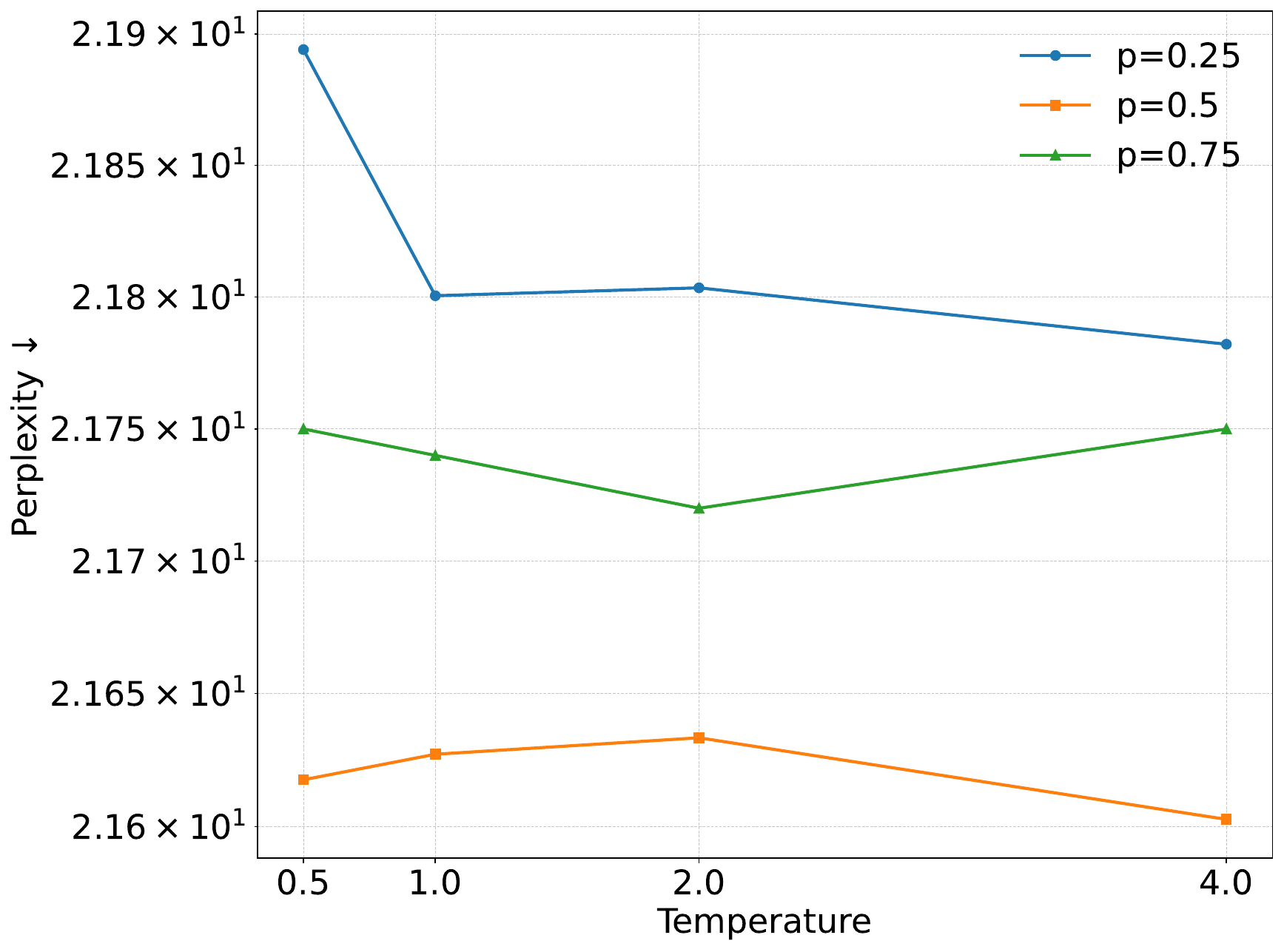} 
        \caption{Comparison of eval perplexity for different values of sampling ratio $p$ and damping temperature $\tau$.}
        \label{fig:sampling-ratio-damping-temperature-sweep}
    \end{minipage}
    \hfill 
    \begin{minipage}[b]{0.45\textwidth}
        \centering
        \includegraphics[width=\linewidth]{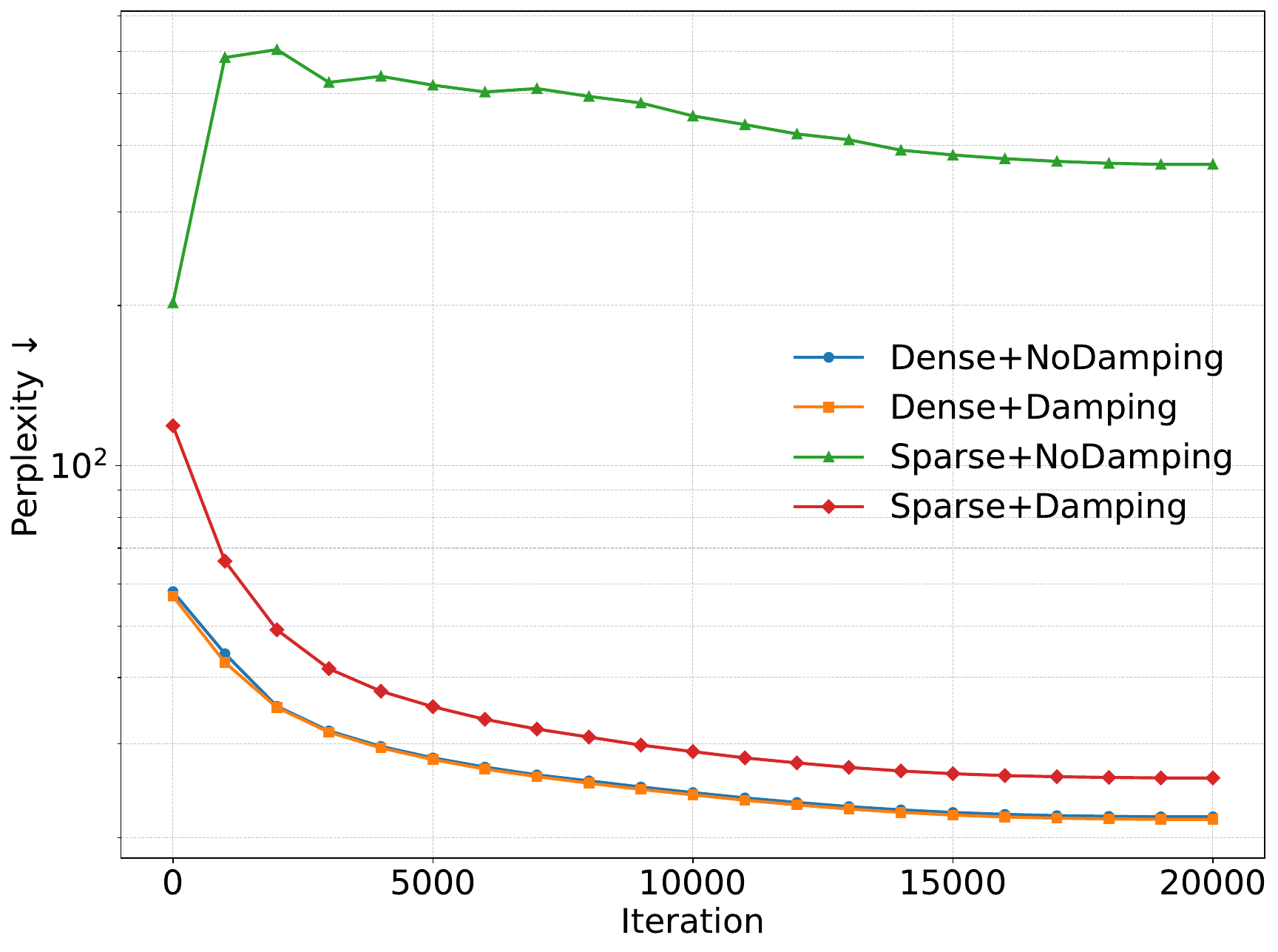} 
        \caption{Comparison of training perplexity on the C4 dataset over 20,000 iterations for Dense and Sparse momentum updates, with and without damping. }
        \label{fig:dense-vs-sparse-momentum-update}
    \end{minipage}
\end{figure}

% \begin{figure}
% % \vskip -0.1in
% \begin{center}
% \includegraphics[width=0.5\columnwidth]{figs/dense_vs_sparse_momentum_update.pdf}
% \end{center}
% \caption{
% Comparison of training perplexity on the C4 dataset over 20,000 iterations for Dense and Sparse momentum updates, with and without damping. 
% % See Appendix \ref{subsec:sparse-vs-dense-momentum-update} for further details. 
% }
% \label{fig:dense-vs-sparse-momentum-update}
% % \vskip -0.3in
% \end{figure}

\subsection{Sampling Ratio and Damping Temperature}
To identify the optimal sampling ratio $p$ and damping temperature $\tau$, we experimented with $p \in \{0.25, 0.5, 0.75\}$ and $\tau \in \{ 0.5, 1.0, 2.0, 4.0\}$. As Figure~\ref{fig:sampling-ratio-damping-temperature-sweep} demonstrates, $p=0.5$ outperforms other values across all temperatures. On the other hand, the results are not very sensitive to temperature, so we proceed with $\tau=2.0$ in all our experiments.

% \begin{figure} 
% % \vskip -0.1in
% \begin{center}
% \includegraphics[width=0.5\columnwidth]{figs/sampling_ratio_and_damping_temperature.pdf}
% \end{center}
% \caption{
% Comparison of eval perplexity for different values of sampling ratio $p$ and damping temperature $\tau$.
% }
% \label{fig:sampling-ratio-damping-temperature-sweep}
% \end{figure}

\begin{figure} 
% \vskip -0.1in
\begin{center}
\includegraphics[width=0.5\columnwidth]{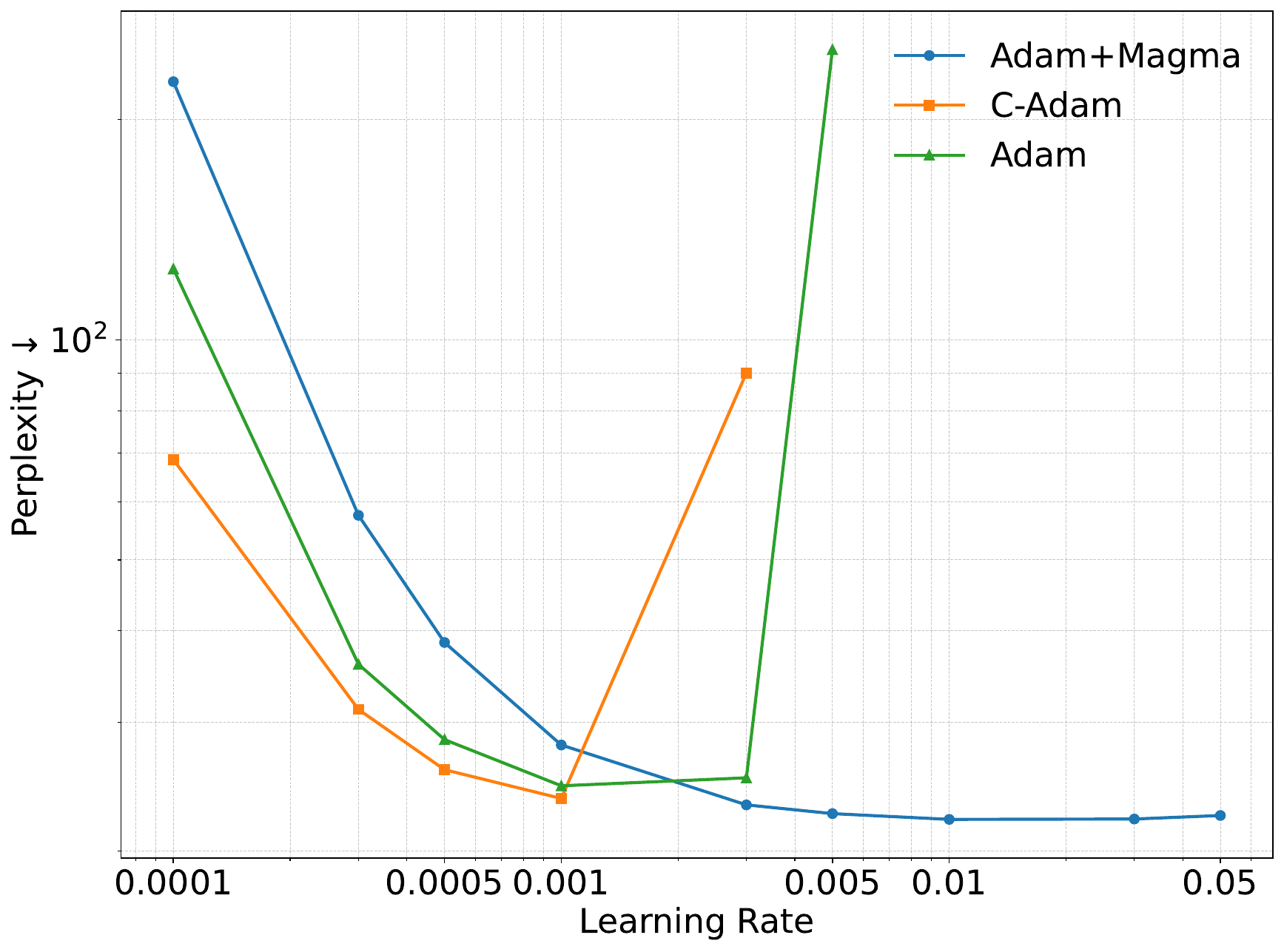}
\end{center}
\caption{Sensitivity analysis of learning rate on evaluation perplexity. Unlike Adam and C-Adam, which exhibit narrow optimal windows, Adam+Magma maintains consistent performance, demonstrating stability across a significantly wider hyperparameter range.}
\label{fig:learning-rate-sensitivity}
% \vskip -0.2in
\end{figure}

\subsection{Sparse vs. Dense Momentum Update}
\label{subsec:sparse-vs-dense-momentum-update}
We consider four different settings with a learning rate of 0.001. The results indicate a critical disparity between dense and sparse update methods. As shown in Figure~\ref{fig:dense-vs-sparse-momentum-update}, the dense baselines—regardless of damping—consistently maintain robust convergence and achieve the lowest perplexity. In contrast, the sparse momentum update without damping exhibits severe instability, characterized by a sharp increase in perplexity that remains high throughout the 20,000 iterations. Although the introduction of damping stabilizes the model, its trajectory still underperforms dense update baselines.

\subsection{Sensitivity to Learning Rate} As Figure~\ref{fig:learning-rate-sensitivity} demonstrates, Adam+Magma exhibits superior robustness to learning rate variations compared to baseline optimizers. While C-Adam and Adam are sensitive to the learning rate—with perplexity spiking when the learning rate deviates from approximately 0.001–0.003—Adam+Magma maintains stability across a broader spectrum. In particular, it remains effective at rates up to 0.05, a region where other optimizers fail to converge. This suggests Adam+Magma reduces the need for precise hyperparameter tuning, offering greater reliability for resource-constrained experimental setups.

\end{document}